\theoremstyle{definition}
\newtheorem{theorem}{Theorem}[section]
\newtheorem{lemma}[theorem]{Lemma}
\newtheorem{definition}{Definition}[section]
\newcommand{\bc}[1]{\left\{{#1}\right\}}
\newcommand{\br}[1]{\left({#1}\right)}
\newcommand{\bs}[1]{\left[{#1}\right]}
\newcommand{\abs}[1]{\left| {#1} \right|}
\newcommand{\x}{{\textrm x}}
\renewcommand{\P}{\textrm{P}}
\newcommand{\bb}{\mathbb}
\newcommand{\R}{\bb R}
\newcommand{\E}{\bb E}
\newcommand{\cB}{\mathcal B}
\newcommand{\cD}{\mathcal D}
\newcommand{\cF}{\mathcal F}
\newcommand{\cO}{\mathcal O}
\newcommand{\cS}{\mathcal S}
\newcommand{\cX}{\mathcal X}
\newcommand{\cY}{\mathcal Y}
\newcommand{\cZ}{\mathcal Z}
\newcommand{\cH}{\mathcal H}
\newcommand{\cP}{\mathcal P}
\DeclareMathOperator*{\argmax}{argmax}
\DeclareMathOperator*{\argmin}{argmin}
\newcommand{\IWALSone}{\textrm{S}}
\newcommand{\IWEBS}{\textrm{IWeS}}
\newcommand{\IWEBSV}{\textrm{IWeS-V}}
\newcommand{\AL}{\textrm{AL}}
\newcommand{\IWAL}{\textrm{IWAL}}
\title{Leveraging Importance Weights in Subset Selection}
\author{Gui Citovsky$^1$, Giulia DeSalvo$^1$, Sanjiv Kumar$^1$, Srikumar Ramalingam$^1$,\\
\textbf{Afshin Rostamizadeh$^1$, Yunjuan Wang$^2$\thanks{This work was done when the author was interning at Google.}} \\
$^1$Google Research, New York, NY 10011\\
$^2$Department of Computer Science, Johns Hopkins University, Baltimore, MD, 21218\\
\texttt{\{gcitovsky,giuliad,sanjivk,rostami,rsrikumar\}@google.com} \\
\texttt{ywang509@jhu.edu}
}
\begin{document}

\maketitle

\begin{abstract}

We present a subset selection algorithm designed to work with arbitrary model families in a practical batch setting. In such a setting, an algorithm can sample examples one at a time but, in order to limit overhead costs, is only able to update its state (i.e. further train model weights) once a large enough batch of examples is selected.  Our algorithm, IWeS, selects examples by importance sampling where the sampling probability assigned to each example is based on the entropy of models trained on previously selected batches. IWeS admits significant performance improvement compared to other subset selection algorithms for seven publicly available datasets. Additionally, it is competitive in an active learning setting, where the label information is not available at selection time.
We also provide an initial theoretical analysis to support our importance weighting approach, proving generalization and sampling rate bounds.
 
\end{abstract}

\section{Introduction}

Deep neural networks have shown remarkable success in several domains such as computer vision and natural language processing. In many tasks, this is achieved by heavily relying on extremely large labeled datasets. In addition to the storage costs and potential security/privacy concerns that come along with large datasets, training modern deep neural networks on such datasets also incur high computational costs.
With the growing size of datasets in various domains, algorithm scalability is a real and imminent challenge that needs to be addressed.
One promising way to solve this problem is with data subset selection, where the learner aims to find the most informative subset from a large number of training samples to approximate (or even improve upon) training with the entire training set. Such ideas have been extensively studied in k-means and k-median clustering \citep{har2004coresets}, subspace approximation \citep{feldman2010coresets}, computational geometry \citep{agarwal2005geometric}, density estimation \citep{turner2021statistical}, to name a few.

One particular approach for solving data subsampling involves the computation of coresets, which are weighted subsets of a dataset that can act as the proxy for the whole dataset to solve some optimization task. Coreset algorithms are primarily motivated with theoretical guarantees that bound the difference between the training loss (or other such objective) over the coreset and that over the full dataset under different assumptions on the losses and hypothesis classes~\citep{mai2021coresets,Munteanu_NeurIPS2018,Curtin_2019,karninedo2019}. However, in practice, most competitive subset selection algorithms, that are designed for general loss functions and arbitrary function classes, focus only on selecting informative subsets of the data and typically do not assign weights to the selected examples. These methods are, for example, based on some notion of model uncertainty \citep{scheffer2001active}, information gain \citep{argamon1999committee}, loss gradients \citep{paul2021deep,ash2019deep}, or diversity \citep{sener2017active}. Counter to this trend, we show that weighting the selected samples can be very beneficial.

In this work, we present a subset selection algorithm called \IWEBS\ that is designed for general loss functions and hypothesis classes and that selects examples by importance sampling, a theoretically motivated and unbiased sampling technique. Importance sampling is conducted according to a specially crafted probability distribution and, importantly, each sampled example is weighted inversely proportional to its sampling probability when computing the training loss. We develop two types of sampling probability for different practical requirements (e.g. computational constraints and label availability), but in both cases, the sampling probability is based on the example's entropy-based score computed using a previously trained model.
We note, the \IWEBS\ algorithm is similar to the IWAL active learning algorithm of \cite{beygelzimer2009importance} as both are based on importance sampling. However, in contrast to IWAL, \IWEBS\ uses a different sampling probability definition with a focus on providing a practical method that is amenable to large deep networks and complex hypothesis classes.

Through extensive experiments, we find that the \IWEBS\ algorithm is competitive for deep neural networks over several datasets.
We compare our algorithm against four types of baselines whose sampling strategies leverage: the model's uncertainty over examples, diversity of selected examples, gradient information, and random sampling.
Finally, we analyze a closely related albeit less practical algorithm that inspires the design of \IWEBS, called \IWEBSV , proving it admits generalization and sampling rate guarantees that hold for general loss functions and hypothesis classes.

The contributions of this work can be summarized as follows:
\vspace{-0.5em}
\begin{enumerate}[label=\arabic*.,leftmargin=*]
\item We present the \textbf{I}mportance \textbf{We}ighted \textbf{S}ubset Selection (\IWEBS) algorithm that selects examples by importance sampling with a sampling probability based on a model's entropy, which is applicable to (and practical for) arbitrary model families  including modern deep networks. In addition to the subset selection framework, \IWEBS\ also works in the active learning setting where the examples are unlabeled at selection time.

\item We demonstrate that \IWEBS\ achieves significant improvement over several baselines (Random, Margin, Least-Confident, Entropy, Coreset, BADGE) using VGG16 model for six common multi-class datasets (CIFAR10, CIFAR10-corrupted, CIFAR100, SVHN, Eurosat, Fashion MNIST), and using ResNet101 model for the large-scale multi-label OpenImages dataset.

\item We provide a theoretical analysis for a closely related algorithm, \IWEBSV, in Section \ref{sec:theory}. We prove a $\cO(1/\sqrt{T})$ generalization bound, which depends on the full training dataset size $T$.  We further give a new definition of disagreement coefficient and prove a sampling rate bound by leveraging label information, which is tighter compared with the label complexity bound provided by \cite{beygelzimer2009importance} that does not use label information.
\end{enumerate}
\vspace{-1em}

\subsection{Related Work}
\label{sec:related}
\vspace{-0.5em}

\textbf{Uncertainty.}
Uncertainty sampling, which selects examples that the model is least confident on, is favored by practitioners \citep{mussmann2018uncertainty} and rather competitive among many recent algorithms \citep{yang2018benchmark}. Uncertainty can be measured through entropy \citep{argamon1999committee}, least confidence \citep{culotta2005reducing}, and most popular is the margin between the most likely and the second most likely labels \citep{scheffer2001active}.
\cite{beygelzimer2009importance} makes use of a disagreement-based notion of uncertainty and constructs an importance weighted predictor with theoretical guarantees called IWAL, which is further enhanced by \cite{cortesdesalvogentilmohrizhang2019}. However, IWAL is not directly suitable for use with complex hypothesis spaces, such as deep networks, since it requires solving a non-trivial optimization over a subset of the hypothesis class, the so-called version space, in order to compute sampling probabilities.
We further discuss these difficulties in Section \ref{sec:theory}.

\textbf{Diversity.}
In another line of research, subsets are selected by enforcing diversity such as in the FASS \citep{wei2015submodularity} and  Coreset  \citep{sener2017active} algorithms. \cite{wei2015submodularity} introduces a submodular sampling objective that trades off between uncertainty and diversity by finding a diverse set of samples from amongst those that the current trained model is most uncertain about. It was further explored by \cite{kaushal2019learning} who designed a unified framework for data subset selection  with facility location and dispersion-based diversity functions. \cite{sener2017active} show that the task of identifying a coreset in an active learning setting can be mapped to solving the k-center problem.
Further recent works related to coreset idea are \cite{mirzasoleiman2020coresets,killamsetty2021grad}, where the algorithms select representative subsets of the training data to minimize the estimation error between the weighted gradient of selected subset and the full gradient.

\textbf{Loss Gradient.}
Another class of algorithms selects a subset by leveraging the loss gradients. For example,
the GRAND score \citep{paul2021deep}, or closely related EL2N score, leverages the average gradient across several different independent models to measure the importance of each sample.
However, as such, it requires training several neural networks, which is computationally expensive.
BADGE \citep{ash2019deep} is a sampling strategy for deep neural networks that uses k-MEANS++ on the gradient embedding of the networks to balance between uncertainty and diversity.
Finally, for the sake of completeness, we note that importance weighting type approaches have also been used for the selection of examples within an SGD minibatch  \citep{katharopoulos2018not,johnson2018training}, which can be thought of a change to the training procedure itself. In contrast, the problem setting we consider in this work requires explicitly producing a (weighted) subset of the training data and treats the training procedure itself as a black-box.

These are a sampling of data subset selection algorithms, and we refer the reader to \citep{guo2022deepcore} for a more detailed survey. In this work, we choose at least one algorithm from each of the categories mentioned above, in particular, Margin \citep{scheffer2001active},  BADGE \citep{ash2019deep}, and Coreset \citep{sener2017active} to compare against empirically in Section~\ref{sec:exp}. However, before that, we first formally define the \IWEBS\ algorithm in the following section.
\looseness=-1

\section{The \IWEBS\ Algorithm}\label{sec:algo}
\begin{algorithm}[t]
\caption{\textbf{I}mportance \textbf{We}ighted \textbf{S}ubset Selection (\IWEBS)}
\centering
\begin{algorithmic}[1]\label{algo:iwals}
\REQUIRE Labeled pool $\mathcal{P}$, seed set size $k_0$, subset batch size $k$, number of iterations $R$, weight cap parameter $u$.
\STATE Initialize the subset $\cS=\emptyset$.
\STATE $\cS_{0} \leftarrow$ Draw $k_0$ examples from $\mathcal{P}$ uniformly at random.
\STATE Set $\cS = \{(\x,y,1):(\x,y) \in \cS_0\}$ and $\mathcal{P}=\mathcal{P} \backslash \cS_0$
\FOR{$r=1,2,\ldots,R$}
\STATE Set $\cS_r=\emptyset$.
\STATE Train $f_r, g_r$ on $\cS$ using the weighted loss and independent random initializations.
\WHILE{$|\cS_r|<k$}
\STATE Select $(\x,y)$ uniformly at random from $\mathcal{P}$.
\STATE Set $p(\x,y)$ using entropy-based disagreement or entropy criteria shown in Eq~\eqref{eq:disagreement_strategy} and ~\eqref{eq:entropy_strategy}.
\STATE $Q\sim\textrm{Bernoulli}(p(\x,y))$.
\IF{$Q=1$}
\STATE Set $\cS_r=\cS_r\cup\bc{\br{\x,y,\min\br{\frac{1}{p(\x,y)}, u}}}$ and $\mathcal{P} =\mathcal{P} \backslash\bc{\br{\x,y}}$.
\ENDIF
\ENDWHILE
\STATE $\cS = \cS \cup \cS_r$.
\ENDFOR
\STATE  Train $f_{R+1}$ on $\cS$ using the weighted loss.
\RETURN $\cS,f_{R+1}$
\end{algorithmic}
\end{algorithm}

We consider a practical batch streaming setting, where an algorithm processes one example at a time without updating its state until a batch of examples is selected. That is, like in standard streaming settings, the algorithm receives a labeled example, and decides whether to include it in the selected subset or not. Yet, the algorithm is only allowed to update its state after a fixed batch of examples have been selected in order to limit the overhead costs (e.g. this typically can include retraining models and extracting gradients).
Unlike the pool-based setting where the algorithm receives the entire labeled pool beforehand, a batch streaming setting can be more appropriate when facing a vast training data pool since the algorithm can only process a subset of the pool without iterating over the whole pool. Note that any batch streaming algorithm can also be used in a pool-based setting, by simply streaming through the pool in a uniformly random fashion.
At a high level, the \IWEBS\ algorithm selects examples by importance sampling where the sampling probability is based on the entropy of models trained on previously selected data. We define two sampling probabilities that allow us to trade-off between performance and the computational cost, as well as label-aware or an active learning setting leading to less label annotation costs. As we will subsequently see, these sampling definitions are both easy to use and work well in practice.

To define the algorithm in more detail, we let $\cX\in\R^d$ and $\cY = \{1, \ldots, c\}$ denote the input space and the multi-class label space, respectively. We assume the data $(\x,y)$ is drawn from an unknown joint distribution $\cD$ on $\cX\times\cY$.
 Let $\cH=\{h:\cX\rightarrow \cZ\}$ be the hypothesis class consisting of functions mapping from $\cX$ to some prediction space $\cZ\subset \R ^\cY$ and let $\ell:\cZ\times\cY\rightarrow\R$ denote the loss.
\looseness=-1

The pseudocode of \IWEBS\ is shown in Algorithm \ref{algo:iwals}. Initially, a seed set $\cS_0$ ($|\cS_0|=k_0$) is selected uniformly at random from the labeled pool $\cP$.
Then the algorithm proceeds in rounds $r\in [1,\ldots, R]$ and it consists of two main components: training and sampling.
At the training step at round $r$, the model(s) are trained using the importance-weighted loss, namely $f_{r}=\arg\min_{h\in\cH}\sum_{(\x,y,w)\in\cS}w\cdot\ell(h(\x),y)$ on the subset $\cS$, selected so far, in the previous $r-1$ rounds. Depending on the sampling strategy, we may need to randomly initialize two models $f_{r}, g_{r}$ in which case they are trained independently on the same selected subset $\cS$, but with different random initializations.
At the sampling step at round $r$, the \IWEBS\ algorithm calculates a sampling probably for example $(\x,y) \in \cS$  based on one of the following  definitions:
\vspace{-0.5em}
\begin{itemize}[leftmargin=*]
\item {\bf Entropy-based Disagreement.} We define the sampling probability based on the disagreement on two functions with respect to entropy restricted to the labeled example $(\x, y)$. That is,
\begin{align}\label{eq:disagreement_strategy}
p(\x,y)=\abs{\P_{f_{r}}(y|\x)\log{\P_{f_{r}}(y|\x)}-\P_{g_{r}}(y|\x)\log{\P_{g_{r}}(y|\x)}}
\end{align}
where $\P_{f_{r}}(y|\x)$ is the probability of class $y$ with model $f_{r}$ given example $\x$. If the two functions, $f_{r}, g_{r}$, disagree on the labeled example $(\x, y)$, then $p(\x,y)$ will be small and the example will be less likely to be selected.  This definition is the closest to the \IWEBSV\ algorithm analyzed in Section~\ref{sec:theory} and achieves the best performance when the computational cost of training two models is not an issue. In Appendix~\ref{sec:extexp}, we show an efficient version of entropy-based disagreement that utilizes only one model and achieves similar performance.

\item {\bf Entropy.} We define the sampling probability by the normalized entropy of the model $f_{r}$ trained on past selected examples:
\begin{align}\label{eq:entropy_strategy}
p(\x,\cdot)=-\sum_{y'\in\cY}\P_{f_{r}}(y'|\x)\log_2{\P_{f_{r}}(y'|\x)}/\log_2{|\cY|}.
\end{align}
The sampling probability $p(\x, \cdot)$ is high whenever the model class probability $\P_{f_{r}}(y'|\x)$ is close to $1/|\cY|$, which is when the model is not confident about its prediction as it effectively randomly selects a label from $\cY$.   This definition does not use the label $y$ and thus it can be used in an active learning setting where the algorithm can only access the unlabeled examples.  Another advantage is that it only requires training one model, thereby saving some computational cost.
\end{itemize}
\vspace{-0.5em}
We note that entropy-based sampling has been used in algorithms such as uncertainty sampling as discussed in the related works section, but using entropy to define importance weights has not been done in past literature.

Based on one of these definitions, the \IWEBS\ algorithm then decides whether to include the example into the selected subset $\cS$ by flipping a coin $Q$ with chosen sampling probability $p(\x,y)$. If the example is selected, the example's corresponding weight $w$ is set to $\frac{1}{p(\x,y)}$, and the example is removed from the labeled pool $\cP=\cP\backslash\bc{\br{\x,y}}$.  This process is repeated until $k$ examples have been selected.  Below we use \IWEBS-dis as an abbreviation for \IWEBS\ algorithm with Entropy-based Disagreement sampling probability and \IWEBS-ent for \IWEBS\ algorithm with Entropy sampling probability.
\looseness=-1

The weighted loss used to train the model can be written as $\frac{1}{|\cP|}\sum_{i\in\cP}\frac{Q_i}{p(\x_i,y_i)}\ell(f(\x_i),y_i)$ and it is an unbiased estimator of the population risk $\E_{(\x,y)\sim\cD}[\ell(f(\x),y)]$. Yet such estimator can have a large variance when the model is highly confident in its prediction, that is whenever $\P_{f_{r}}(y|\x)$ is large, then $p(\x, y)$ is small. This may lead to training instability and one pragmatic approach to addressing this issue is by ``clipping'' the importance sampling weights \citep{ionides2008truncated,swaminathan2015counterfactual}. Thus in our algorithm, we let $u$ be the upper bound on the weight of the selected example. Although this clipping strategy introduces an additional parameter, we find it is not too sensitive and, as mentioned in the empirical section, set it to a fixed constant throughout our evaluation.

\section{Empirical Evaluation}\label{sec:exp}
We compare \IWEBS\ with state-of-the-art baselines on several image classification benchmarks. Specifically, we consider six multi-class datasets (CIFAR10, CIFAR100 \citep{krizhevsky2009learning}, SVHN \citep{netzer2011reading}, EUROSAT \citep{helber2019eurosat}, CIFAR10 Corrupted \citep{hendrycks2018benchmarking}, Fashion MNIST \citep{xiao2017fashion} and one large-scale multi-label Open Images dataset~\citep{krasin2017openimages}.
In the multi-class setting, each image is associated with only one label. On the other hand, the multi-label Open Images dataset consists of 19,957 classes over 9M images, where each image contains binary labels for a small subset of the classes (on average 6 labels per image).
Further details of each dataset can be found in Table~\ref{tab:datasets} and Table~\ref{tab:openimages} in the appendix.

For all experiments, we consider a diverse set of standard baselines from both subset selection and active learning literature (discussed in Section~\ref{sec:related}).
\vspace{-0.5em}
\begin{itemize}[leftmargin=*]
\item \textbf{Uncertainty Sampling} selects top $k$ examples on which the current model admits the highest uncertainty.  There are three popular ways of defining model uncertainty $s(\x)$ of an example $\x$, namely margin sampling, entropy sampling, and least confident sampling, and all are based on $\P_f[\hat y|\x]$, the probability of class $\hat y$ given example $\x$ according to the model $f$. Margin sampling defines the model uncertainty of an example $\x$ as $s(\x)=1-(\P_f[\hat y_1|\x]-\P_f[\hat y_2|\x])$ where  $\hat y_1=\argmax_{y\in\cY}\P_f[y|\x], \hat y_2=\argmax_{y\in\cY\backslash y_1}\P_f[y|\x]$ are the first and second most probable classes for model $f$. For entropy sampling, model uncertainty is defined as $s(\x)=-\sum_{y\in\cY}\P_f(\hat y|\x)\log(\P_f(\hat y|\x))$ while for least confidence sampling, it is defined as $s(\x)=1-\max_{y\in\cY}\P_f(\hat y|\x)$.
\item \textbf{BADGE} of
\citet{ash2019deep} selects $k$ examples by using the $k$-MEANS++ seeding algorithm using the gradient vectors, computed with respect to the penultimate layer using the most likely labels given by the latest model checkpoint.
\item  \textbf{Coreset ($k$-Center)}  of \citet{sener2017active} selects a subset of examples using their embeddings derived from the penultimate layer using the latest model checkpoint. In particular, the $k$ examples are chosen using a greedy 2-approximation algorithm for the $k$-center problem.
\item \textbf{Random Sampling} selects $k$ examples uniformly at random.
\end{itemize}
\vspace{-0.5em}

\subsection{Multi-class Experiments}
\vspace{-0.5em}

 \setlength{\tabcolsep}{0.5pt}
\begin{figure}[t]
\centering
\begin{tabular}{ccc}
\includegraphics[width=0.32\textwidth]{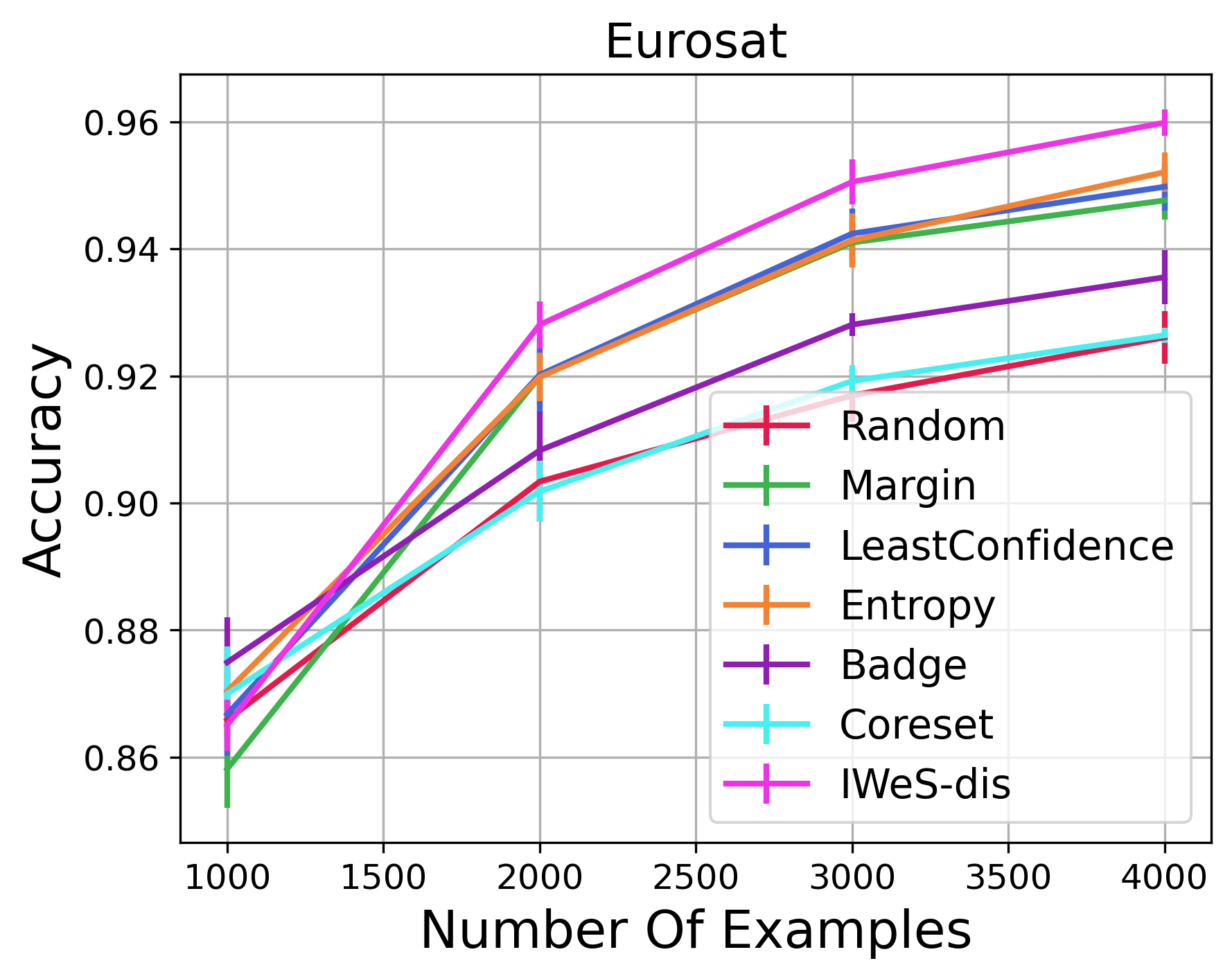}
&
\includegraphics[width=0.32\textwidth]{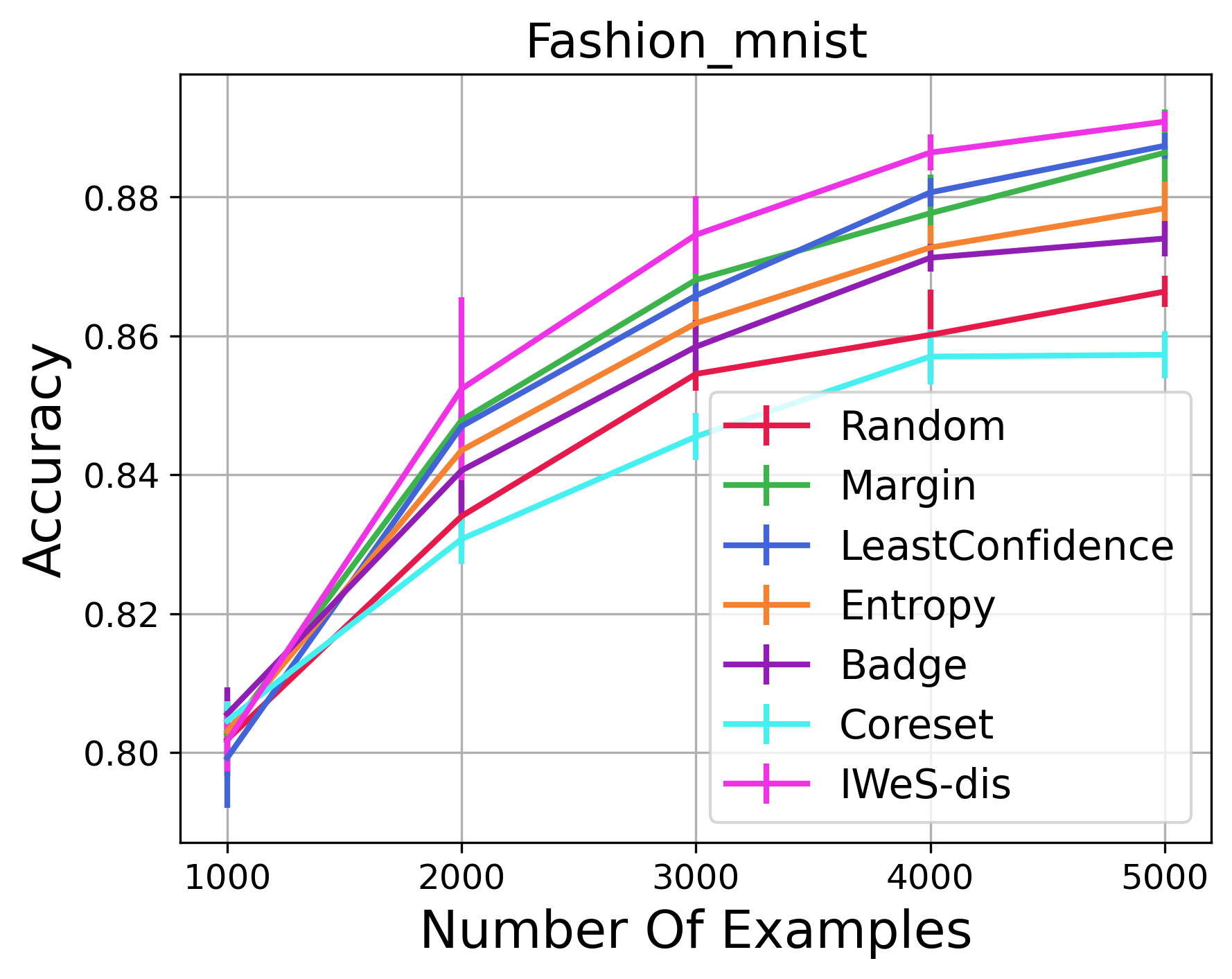}
&\includegraphics[width=0.32\textwidth]{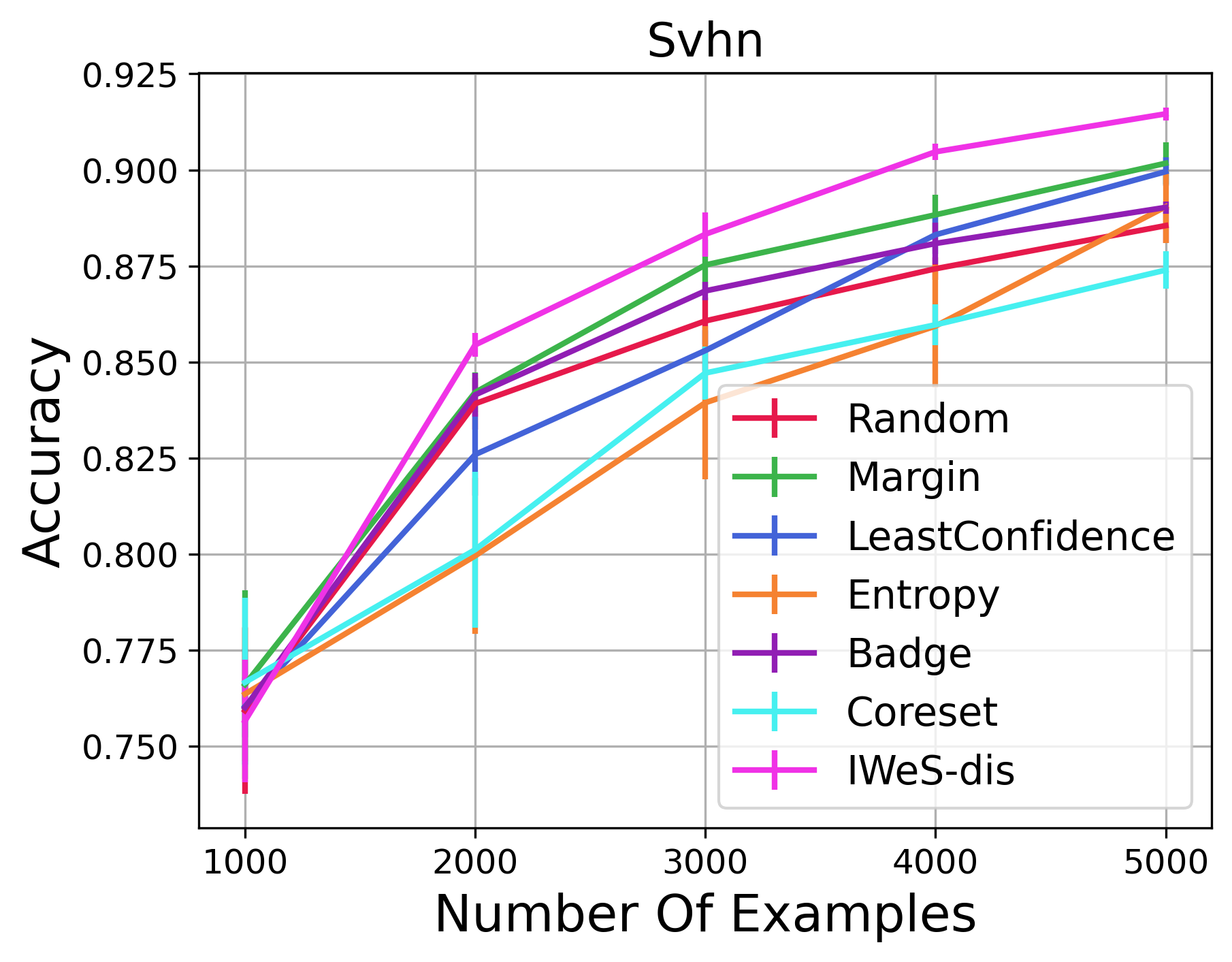}\\
\includegraphics[width=0.32\textwidth]{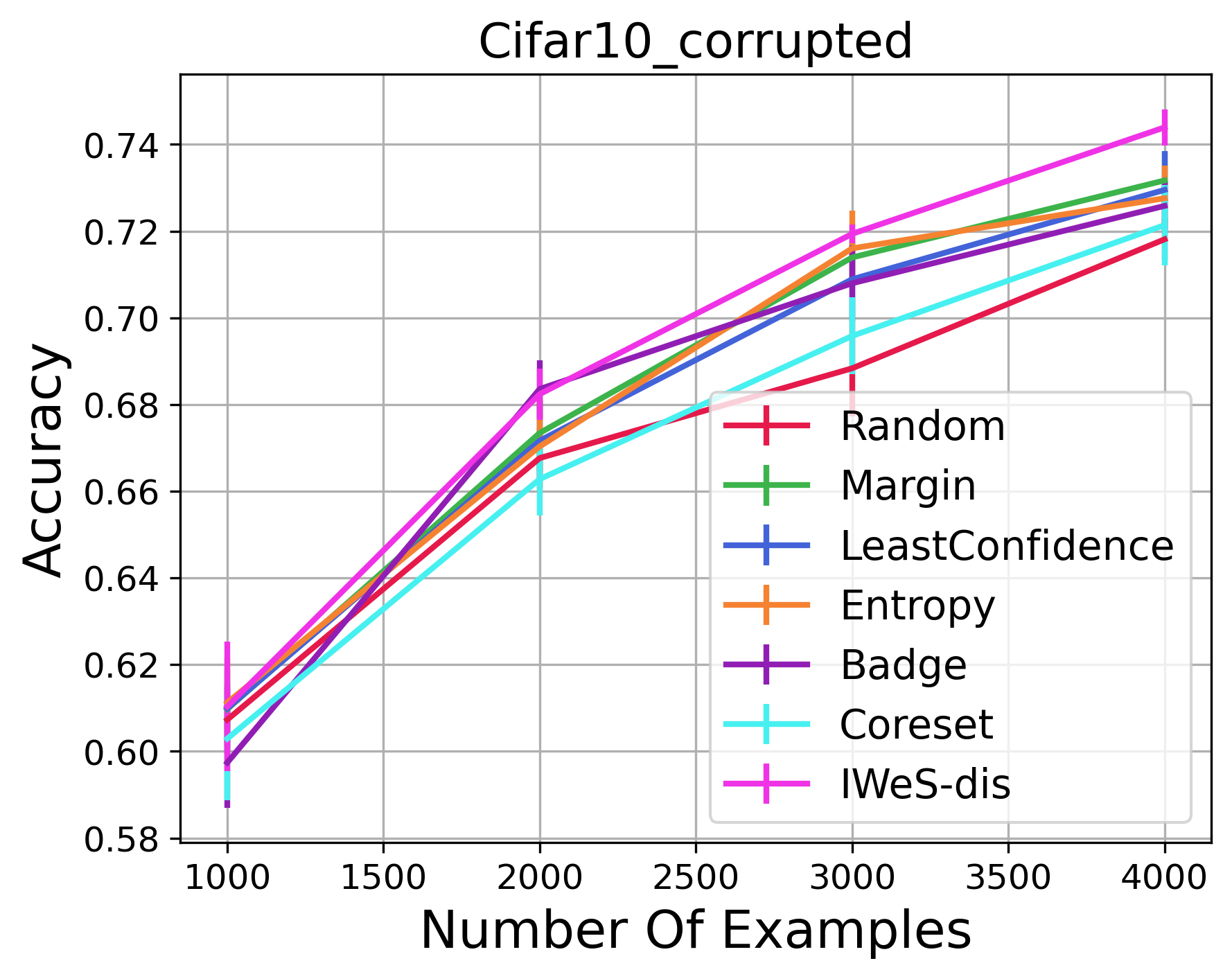}
&
\includegraphics[width=0.32\textwidth]{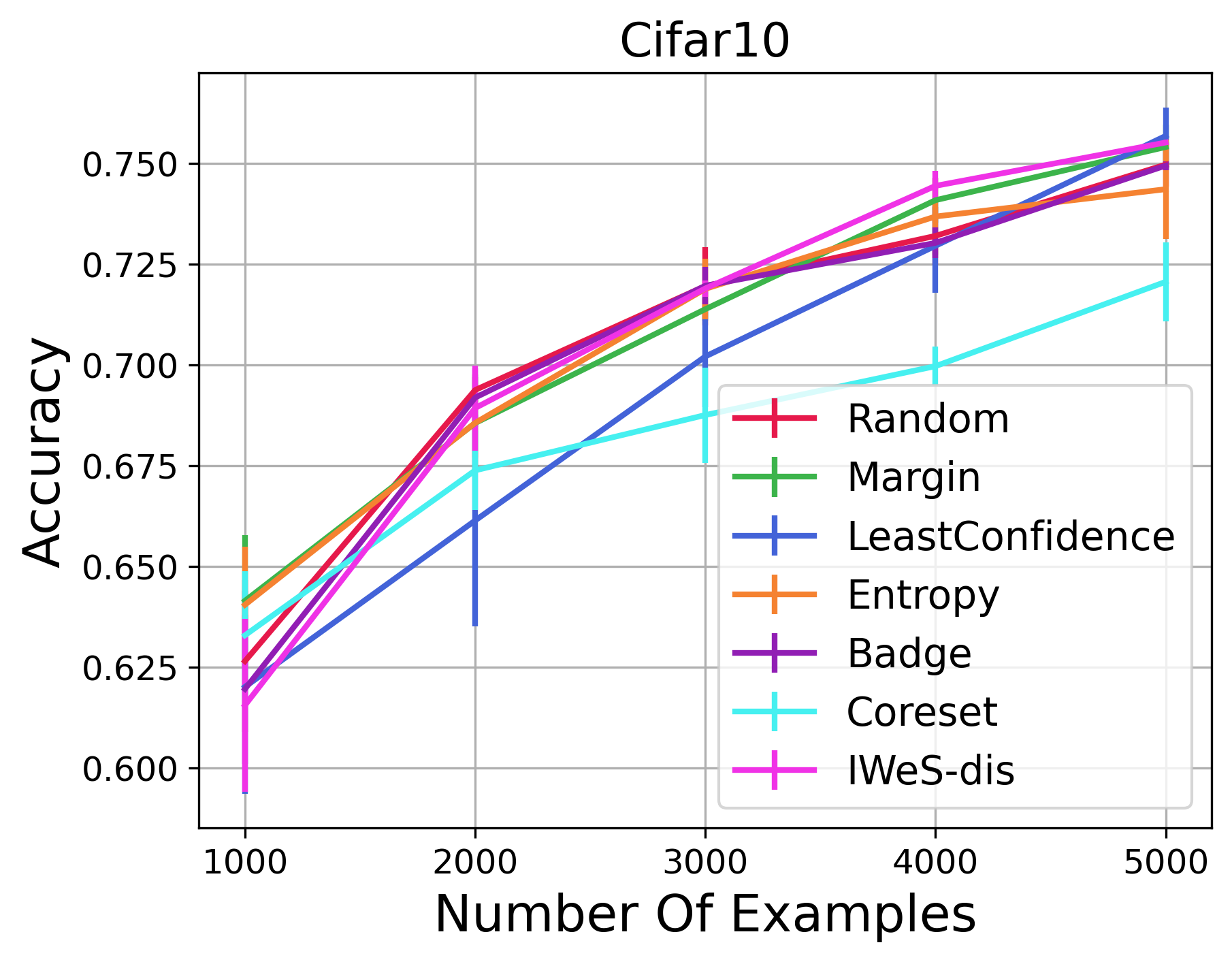}
&
\includegraphics[width=0.32\textwidth]{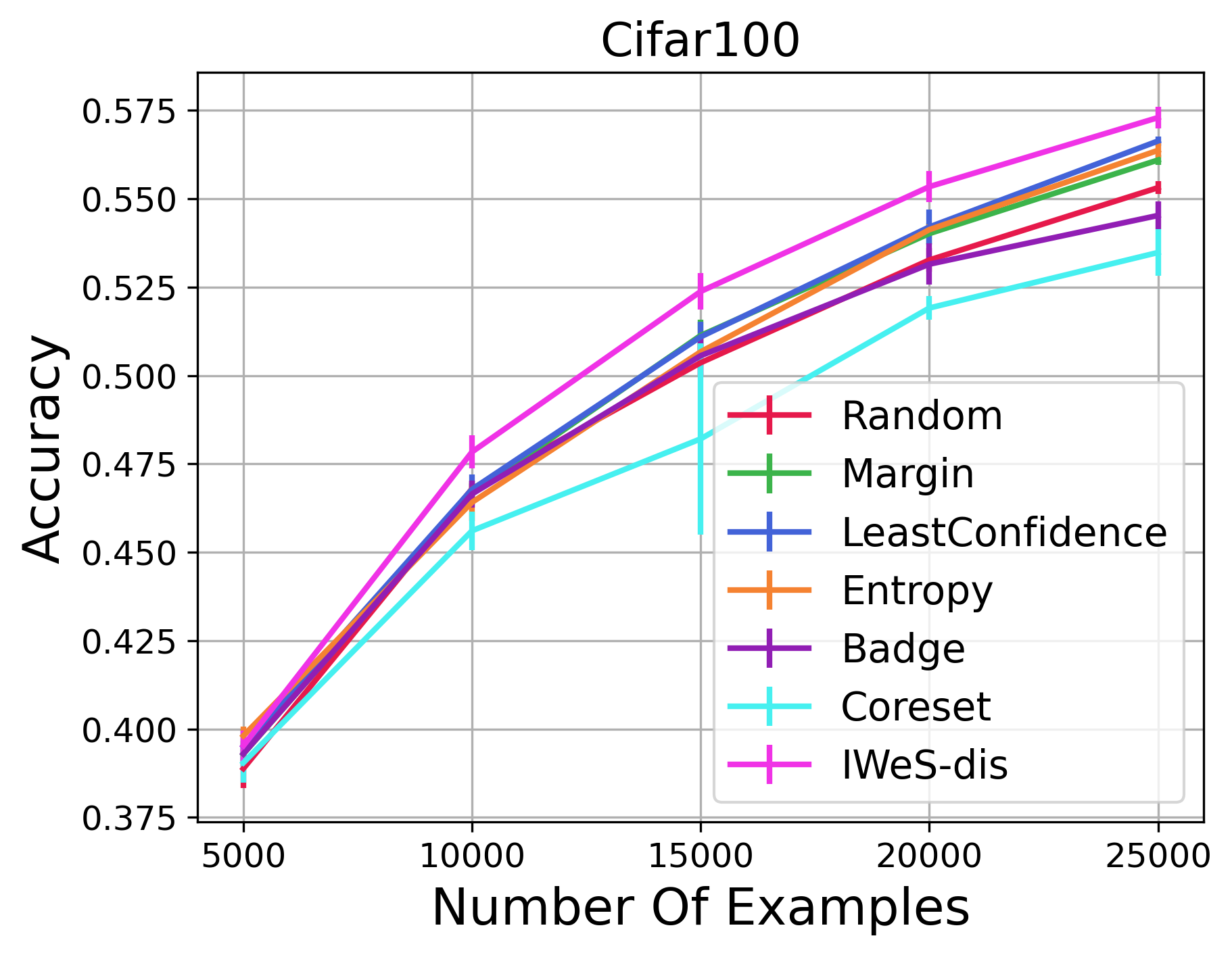}
\end{tabular}
\caption{Accuracy of VGG16 when trained on examples selected by  \IWEBS-dis and baseline algorithms.
}
\vspace{-10pt}
\label{fig:vgg16}
\end{figure}

Here, we compare the \IWEBS\ algorithm against the baselines on the six multi-class image datasets. We use the VGG16 architecture with weights that were pre-trained using ImageNet as well as add two fully-connected 4096 dimensional layers and a final prediction layer. Xavier uniform initialization is used for the final layers.
For each dataset, we tune the learning rate by choosing the rate from the set $\{0.001, 0.002, 0.005, 0.01, 0.1 \}$ that achieves best model performance on the seed set.  We use batch SGD with the selected learning rate and fix SGD’s batch size to 100. At each sampling round $r$, the model is trained to convergence on all past selected examples.
 For \IWEBS , we set the weight capping parameter to 2 for all datasets except for CIFAR10 which we decreased to 1.5 in order to reduce training instability.

The embedding layer for BADGE and Coreset is extracted from the penultimate layer having a dimension of 4096. The effective dimension of the gradient vector in BADGE grows with the number of labels, which is problematic for CIFAR100 as it has 100 classes. More specifically, the runtime of BADGE is given by $\cO\br{dkT}$, which can be large for CIFAR100 since the dimension of the gradient vector from the penultimate layer is $d=4096\times 100$, the size of the labeled pool is $T$=50K, and the number of examples selected in each round is $k$=5K. In order to solve this inefficiency for CIFAR100, we split the labeled pool randomly into 100 partitions and ran separate instances of the algorithm in each partition with batch size $k/100$.

Each algorithm is initialized with a seed set that is sampled uniformly at random from the pool. After that, sampling then proceeds in a series of rounds $r$ where the model is frozen until a batch $k$ of examples is selected.
The seed set size and sampling batch size $k$ are set to 1K for CIFAR10, SVHN, EUROSAT, CIFAR10 Corrupted, Fashion MNIST, and to 5K for CIFAR100.
The experiment was repeated for 5 trials. Any trial that encountered divergent training, i.e.\ the resulting model accuracy is more than three times below the standard error of model's accuracy on seed set, was dropped. We note that this happened infrequently (less than 3\% of the time) and all reported averaged results have at least 3 trials.
\looseness=-1

Figure \ref{fig:vgg16} shows the mean and standard error of VGG16 model's accuracy on a held out test set comparing \IWEBS-dis to the baseline methods. The \IWEBS-dis algorithm either outperforms or matches the performance of the baseline algorithms for all datasets. We also find that margin sampling consistently performs well against the remaining baseline algorithms and that BADGE either matches the performance of margin sampling or slightly underperforms on some datasets (Eurosat, Fashion MNIST, CIFAR100). Coreset admits a similar and at times slightly poorer performance compared to random sampling.

Next, Figure \ref{fig:vgg16-iwal} compares the two variants of our algorithm: \IWEBS-dis and \IWEBS-ent.
We find that the \IWEBS-dis performs slightly better than \IWEBS-ent on most of the datasets.
This is not surprising since the \IWEBS-dis sampling probability leverages label information and more computational power, i.e. trains two models. As explained in Section~\ref{sec:theory}, it also better fits our theoretical motivation. Nevertheless, it is important to note that \IWEBS-ent, without the label information, still consistently outperforms or matches the performance of the baselines for all the datasets.

\begin{figure}[t]
\centering
\begin{tabular}{ccc}
\includegraphics[width=0.32\textwidth]{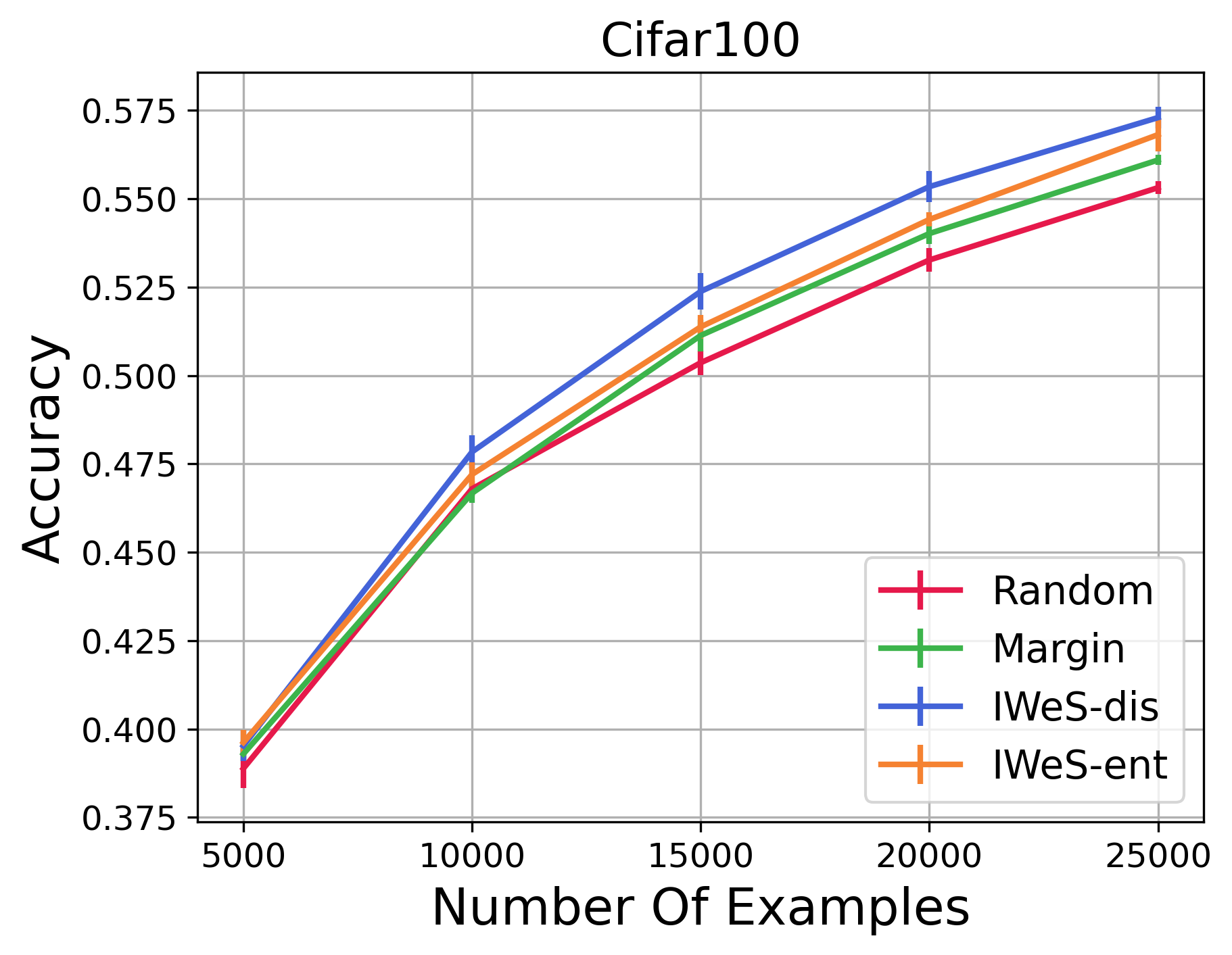}
&
\includegraphics[width=0.32\textwidth]{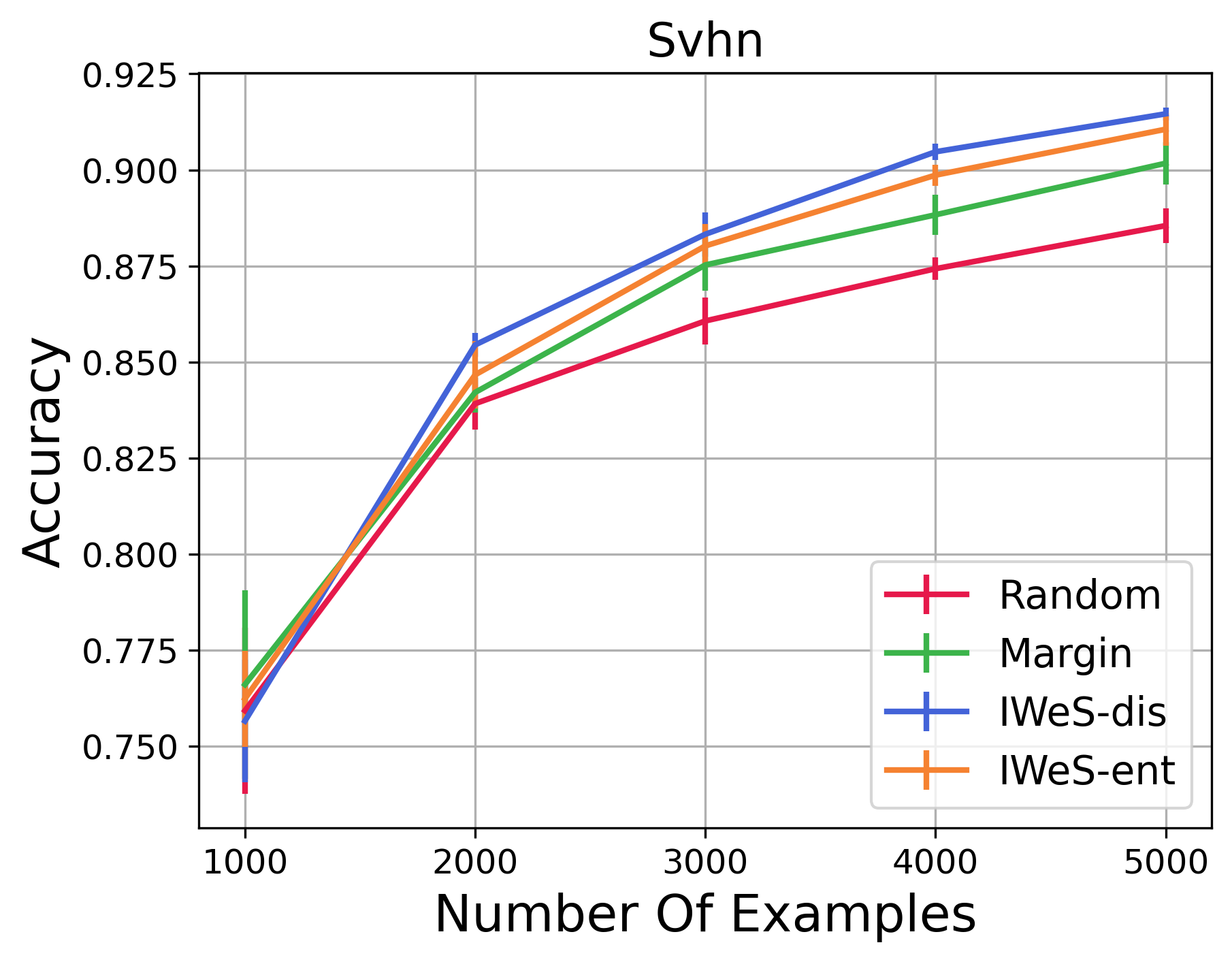}
&
\includegraphics[width=0.32\textwidth]{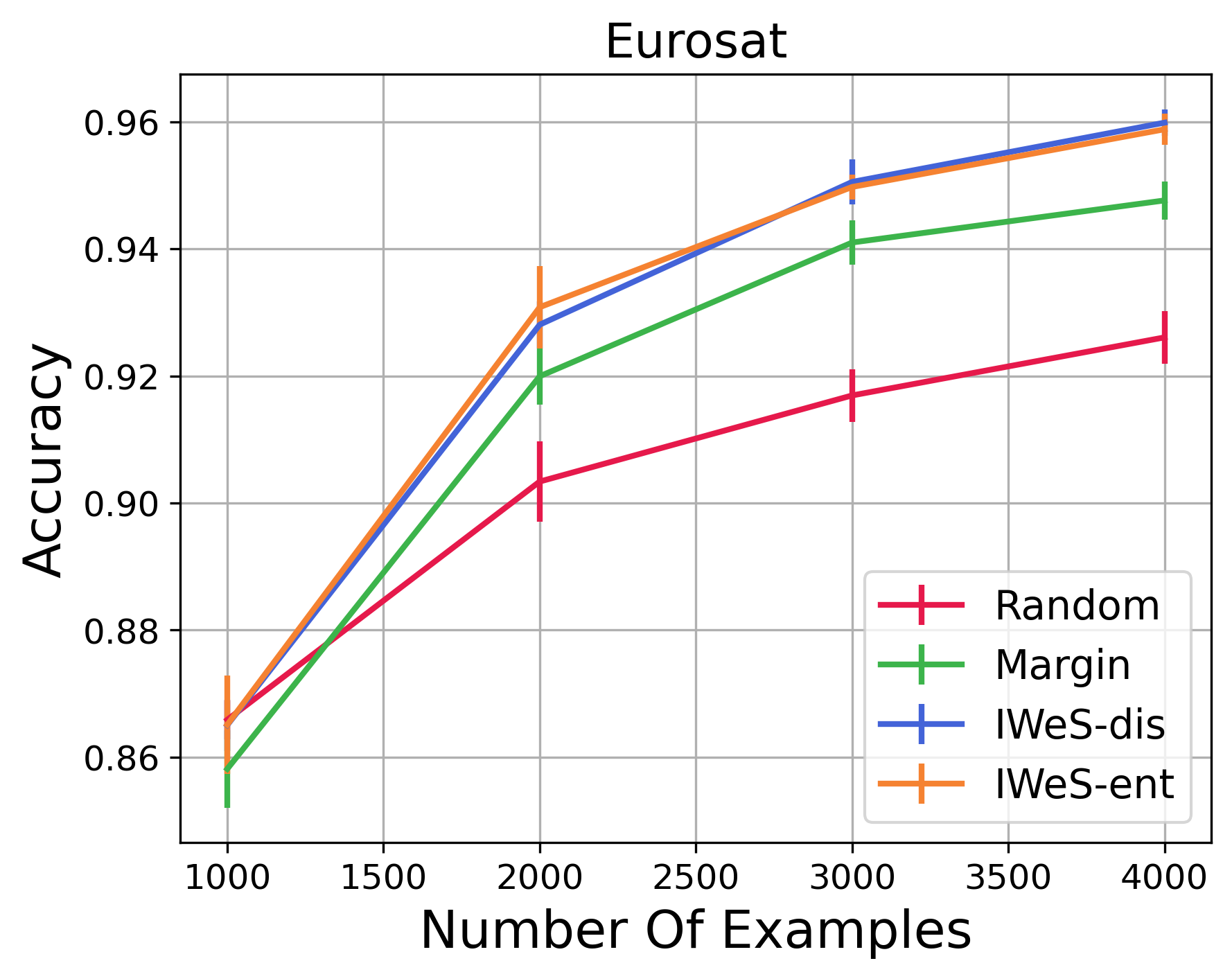}
\end{tabular}
\caption{Accuracy of VGG16 when trained on examples selected by \IWEBS-ent, \IWEBS-dis, margin sampling and random sampling.
}
\vspace{-15pt}
\label{fig:vgg16-iwal}
\end{figure}

\subsection{Multi-label Open Images Experiments}
\vspace{-0.5em}

\begin{wrapfigure}{R}{0.42\textwidth}
\centering
\vspace{-10pt}
\includegraphics[width=0.42\textwidth]{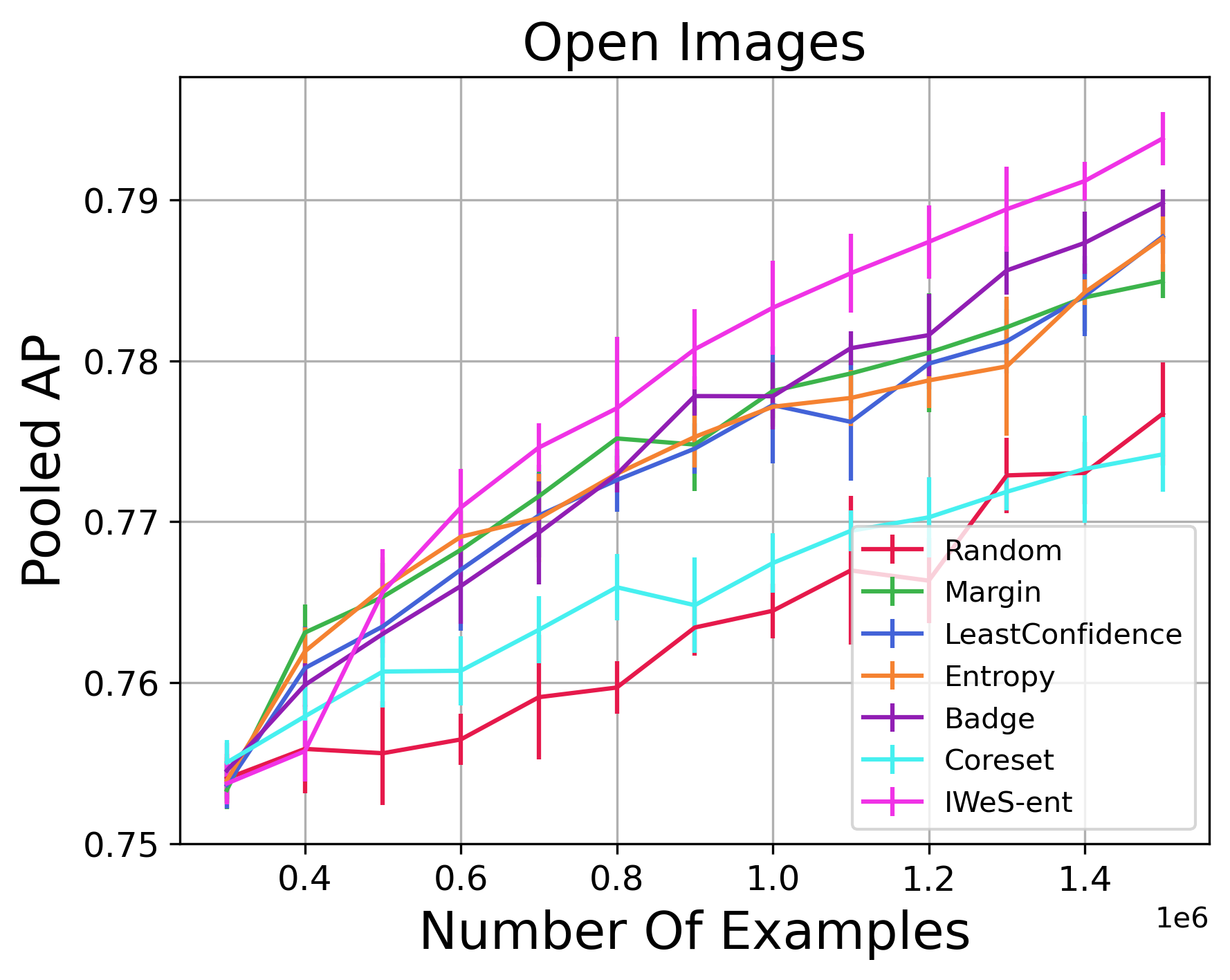}
\vspace{-20pt}
\caption{Pooled Average Precision of ResNet101 trained on examples selected by \IWEBS-ent and the baseline algorithms.}
\vspace{-10pt}
\label{fig:resnet101-openimages}
\end{wrapfigure}

In this section, we evaluate the performance of the \IWEBS\ algorithm on Open Images v6.  We train a ResNet101 model on 64 Cloud two core TPU v4 acceleration, and apply batch SGD with batchsize of 6144 and an initial learning rate of $10^{-4}$ with decay logarithmically every $5\times 10^8$ examples. We add a global pooling layer with a fully connected layer of 128 dimensions as the final layers of the networks, which is needed by BADGE and Coreset baselines.  The model is initialized with weights that were pre-trained on the validation split using 150K SGD steps, and at each sampling round, the model is trained on all past selected examples with an additional 15K SGD steps.

In the previous section, our results show that the \IWEBS-dis algorithm only slightly outperforms the \IWEBS-ent algorithm on a few datasets. Additionally, since the \IWEBS-dis requires training two neural networks, which is computationally expensive in this scenario, we only test the performance of \IWEBS-ent. Since \IWEBS-ent doesn't use the label information, this also allows us to measure the performance of the algorithm in an active learning setting. \looseness=-1

Since Open Images is a multi-label dataset, the sampling algorithms must not only select the image, but also the class. That is, each example selected by an algorithm consists of an image-class pair with a corresponding binary label indicating whether the corresponding class is present in the image or not. In order to adapt \IWEBS-ent to the multi-label setting, the entropy sampling probability for each image-class pair is defined as $p(\x,\cdot)=-\P_{f_r}(y|\x)\log_2{\P_{f_r}(y|\x)}-\br{1-\P_{f_r}(y|\x)}\log_2\br{1-\P_{f_r}(y|\x)}$, where $\P_{f_r}(y|\x)$ is the model class probability of a positive label at round $r$. A seed set of size 300K is sampled uniformly at random from the pool, and at each sampling round $r$, the algorithms select 100K examples.  Similarly to the previous section, in order to run BADGE on Open Images, we divide the pool into 100 partitions and run separate instances of the algorithm in each partition.  For \IWEBS , the weight capping parameter is set to 10.

Figure~\ref{fig:resnet101-openimages} shows the mean and standard error across 5 trials of the pooled average precision (Pooled AP) metric for each algorithm. As the number of selected examples increases, \IWEBS-ent outperforms all other baselines methods on the Open Images dataset. We also find that BADGE performs similarly or even slightly worse than the uncertainty-based sampling algorithms when the number of selected examples is smaller than 800K, and then outperforms all uncertainty-based sampling as the number of selected examples increases. Coreset initially performs better than random sampling, but at later sampling rounds, it admits a similar performance to random sampling.

\section{Theoretical Motivation}\label{sec:theory}
 
In order to theoretically motivate the \IWEBS\ algorithm, we analyze a closely related algorithm which we call \IWEBSV , adapted from the IWAL algorithm of \citet{beygelzimer2009importance}. We prove that \IWEBSV\ admits generalization bounds that scale with the dataset size $T$ and sampling rate bounds that are in terms of a new disagreement coefficient tailored to the subset selection framework.

Below, we let $L(h)=\E_{(\x,y)\sim\cD}[\ell(h(\x),y)]$ denote the expected loss of hypothesis $h\in\cH$, and $h^*=\argmin_{h\in\cH}L(h)$ be the best-in-class hypothesis.  Without loss of generality, we consider a bounded loss $\ell:\cZ\times\cY\rightarrow [0,1]$ mapping to the interval $[0,1]$. Such a loss can be achieved by any bounded loss after normalization.  For simplicity, we assume $\cH$ is a finite set, but our results can be easily extended by standard covering arguments to more general hypothesis sets such as finite VC-classes.
\looseness=-1

The \IWEBSV\ algorithm operates on an i.i.d. example $(\x_1,y_1),(\x_2,y_2),\ldots,(\x_T,y_T)$ drawn from $\cD$ sequentially. It maintains a version space $\cH_t$ at any time $t$, with $\cH_1=\cH$. At time $t$, \IWEBSV\ flips a coin $Q_t\in\bc{0,1}$ with bias $p_t$ defined as
\begin{align}\label{eq:pt_iwebsv}
p_t=\max_{f,g\in \cH_t}\ell(f(\x_t),y_t)-\ell(g(\x_t),y_t)
\end{align}
where $\cH_t=\bc{h\in\cH_{t-1}:\frac{1}{t}\sum_{s=1}^t\frac{Q_s}{p_s}\ell(h(\x_s),y_s)\leq \min_{h'\in\cH_{t-1}}\frac{1}{t}\sum_{s=1}^t\frac{Q_s}{p_s}\ell(h'(\x_s),y_s)+\Delta_{t-1}}$ with $\Delta_{t-1}=\sqrt{\frac{8\log(2T(T+1)|\cH|^2/\delta)}{t-1}}$. The example is selected if $Q_t=1$ and otherwise it is discarded. The main idea behind this algorithm is thus to define a sampling probability that is in terms of the disagreement between two hypothesis $f,g$ that are not too far from the best model trained on the past selected data, i.e. $\min_{h\in\cH_{t-1}}\frac{1}{t}\sum_{s=1}^t\frac{Q_s}{p_s}\ell(h(\x_s),y_s)$. The formal \IWEBSV\ algorithm pseudo-code (Algorithm \ref{algo:pre-iwals}) and all the theorem proofs can be found in Appendix \ref{sec:proof}.

For general, e.g.\ non-linear, hypothesis classes it is
computationally infeasible to find two hypotheses $f,g \in \cH_t$ that maximize the expression in equation~\eqref{eq:pt_iwebsv}.
This main impracticality of \IWEBSV\ is reason why we developed the \IWEBS\ algorithm of the previous section.
This drawback is also shared by the \IWAL\ algorithm of \cite{beygelzimer2009importance}, which computes a sampling probability very similar to that of equation \eqref{eq:pt_iwebsv}, but with an additional maximization over the choice of $y \in \cY$ in the definition of the sampling probability $p_t$.

Before continuing we explain how our practical algorithm \IWEBS-dis, specifically using sampling probability in equation \eqref{eq:disagreement_strategy}, is closely related to the \IWEBSV\ algorithm. Recall that the \IWEBS\ algorithm trains two models $f$ and $g$ each minimizing the importance-weighted loss using the data sampled so far. Therefore, each model exhibits reasonable training loss, i.e. they are expected to be included in the version space $\cH_t$ of good hypothesis, while the different random initializations (in the case of non-convex neural network hypotheses) results in models that still differ in certain regions of the feature space. Thus, the difference in equation \eqref{eq:disagreement_strategy} can be thought of as a less aggressive version of the difference found in the maximization of equation \eqref{eq:pt_iwebsv}.

Another dissimilarity between the two is that the \IWEBS-dis algorithm is defined for the batch streaming setting while the \IWEBS-dis algorithm and its analysis is developed for the streaming setting. Said differently, the \IWEBSV\ algorithm can be seen as a special case of the \IWEBS-dis algorithm with target subset size of 1. To extend the theoretical guarantees of \IWEBSV\ to the batch streaming setting, we can follow a similar analysis developed by \cite{amindesalvorostami2020} to also find that the effects of delayed feedback in the batch streaming setting are in fact mild as compared to the streaming setting.
\subsection{Generalization bound}\label{sec:genbound}
\vspace{-0.5em}
Next, we turn to the topic of generalization guarantees and we review an existing bound for coreset based algorithms.
The guarantees of coreset algorithms are generally focused on showing that a model's training loss on the selected subset is close to the same model's training loss on the whole dataset. That is, given dataset $\cP=\{(\x_i,y_i)\}_{i=1}^T\sim\cD^T$,
the learner seek to select a subset $m < T$ of examples $\cS=\{(\x'_i,y'_i)\}_{i=1}^m$ along with a corresponding set of weights $w_1,\ldots,w_m$ such that for some small $\epsilon>0$ and for all $h\in\cH$, the \emph{additive error coreset guarantee} holds
$\left| \sum_{i=1}^m w_i \ell(h(\x_i'),y_i') -  \sum_{i=1}^T  \ell(h(\x_i),y_i)  \right| \leq \epsilon  T $.
The following proposition, which is a minor extension of Fact~8 of \cite{karninedo2019}, allows us to convert a coreset guarantee into a generalization guarantee.

\begin{restatable}{prop}{coresetapprox}
\label{th:bad_genbound}
Let  $h'=\argmin_{h\in \cH} \sum_{i=1}^m w_i \ell(h(\x_i'),y_i') $,
and let the additive error coreset guarantee hold for any $\epsilon>0$,
then for any $\delta>0$, with probability at least $1-\delta$, it holds that
$
L(h') \leq L(h^*) + 2\epsilon + 2\sqrt{\ln(4/\delta)/2T}.
$
\end{restatable}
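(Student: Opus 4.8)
The plan is to decompose the argument into a purely deterministic step that exploits the coreset guarantee together with the optimality of $h'$, followed by a standard concentration step that passes from the empirical loss on $\cP$ to the population loss $L$. Throughout I would write $\hat L_\cP(h)=\frac{1}{T}\sum_{i=1}^T\ell(h(\x_i),y_i)$ for the empirical loss on the full pool and $L_\cS(h)=\sum_{i=1}^m w_i\ell(h(\x'_i),y'_i)$ for the weighted coreset loss, so that the additive error coreset guarantee reads $\abs{L_\cS(h)-T\hat L_\cP(h)}\le\epsilon T$, equivalently $\abs{\frac{1}{T}L_\cS(h)-\hat L_\cP(h)}\le\epsilon$, uniformly over $h\in\cH$.

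First I would establish the deterministic ``sandwich'' inequality $\hat L_\cP(h')\le\hat L_\cP(h^*)+2\epsilon$. This uses the coreset guarantee twice and the defining optimality of $h'$: applying the guarantee to $h'$ gives $\hat L_\cP(h')\le\frac{1}{T}L_\cS(h')+\epsilon$; since $h'$ minimizes $L_\cS$ we have $L_\cS(h')\le L_\cS(h^*)$; and applying the guarantee again to $h^*$ gives $\frac{1}{T}L_\cS(h^*)\le\hat L_\cP(h^*)+\epsilon$. Chaining these three bounds yields the $2\epsilon$ slack, with each $\epsilon$ contributed by one invocation of the coreset property.

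Second I would pass to the population loss using two (two-sided) Hoeffding bounds, each with failure probability $\delta/2$ and hence radius $\gamma:=\sqrt{\ln(4/\delta)/(2T)}$, which is legitimate because $\ell\in[0,1]$ and $\hat L_\cP$ averages $T$ i.i.d. terms. One bound controls the fixed best-in-class hypothesis, $\hat L_\cP(h^*)\le L(h^*)+\gamma$, and the other is meant to give $L(h')\le\hat L_\cP(h')+\gamma$. On the intersection of the two events (union bound) I would then chain $L(h')\le\hat L_\cP(h')+\gamma\le\hat L_\cP(h^*)+2\epsilon+\gamma\le L(h^*)+2\epsilon+2\gamma$, which is exactly the claimed conclusion; note the two radii combine into the factor $2\gamma$ and the two $\delta/2$ budgets give the $\ln(4/\delta)$ inside the root.

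The main obstacle is the second concentration inequality. Unlike $h^*$, the hypothesis $h'$ is a function of the sample $\cP$ (the coreset $\cS$ is selected from $\cP$), so a naive single-hypothesis Hoeffding bound applied to $h'$ is not directly justified. I would make this step rigorous through uniform convergence, bounding $\sup_{h\in\cH}\br{L(h)-\hat L_\cP(h)}$ rather than the single random $h'$; under the paper's finiteness assumption on $\cH$ this is just a union bound over $\abs{\cH}$ hypotheses (and extends to VC or covering arguments in the general case). This is the only place where care is needed, and it is precisely where the finiteness of $\cH$ is used — the clean form of the bound as stated corresponds to applying Hoeffding to $h'$ as though it were fixed, i.e.\ to suppressing the resulting $\log\abs{\cH}$ complexity factor; everything else is bookkeeping.
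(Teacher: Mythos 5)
Your proposal is correct and takes essentially the same route as the paper's proof: two applications of the coreset guarantee, the optimality of $h'$, and two Hoeffding bounds at radius $\sqrt{\ln(4/\delta)/(2T)}$, chained in the same order (the paper merely interleaves the deterministic and concentration steps rather than separating them into a sandwich lemma plus a union bound). It is worth noting that the paper's own proof applies Hoeffding directly to the data-dependent $h'$ without comment, which is precisely the shortcut you flag; your uniform-convergence remark (a union bound over the finite $\cH$, at the cost of a suppressed $\log|\cH|$ factor) is more careful than the published argument on this one point.
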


As shown above, the generalization guarantee depends linearly on $\epsilon$ which in turn depends on the size of the subset $m$.
To give a few examples, \cite{karninedo2019} show that for hypotheses that are defined as analytic functions of dot products (e.g. generalized linear models) this dependence on $m$ is $\epsilon = O(1/m)$, while for more complex Kernel Density Estimator type models the dependence is $\epsilon = O(1/\sqrt{m})$.  See \citet{mai2021coresets}, Table 1, for examples on the dependency between $\epsilon$ and $m$ under different data distributions assumptions (e.g. uniform, deterministic, $\ell_1$ Lewis) and for specific loss functions (e.g. log loss, hinge loss).

We now provide a generalization guarantee for the \IWEBSV\ algorithm, which depends on the size of the labeled pool size $T$. The proof follows from that in \citet{beygelzimer2009importance}.

\begin{restatable}{theorem}{iwebsgenbound}\label{th:genbound}
Let $h^*\in \cH$ be the minimizer of the expected loss function $h^*=\argmin_{h\in \cH}L(h)$. For any $\delta>0$, with probability at least $1-\delta$, for any $t\geq 1$ with $t\in\bc{1,2\ldots,T}$, we have that  $h^*\in \cH_t$ and that $ L(f)-L(g) \leq 2 \Delta_{t-1}$ for any $f,g\in \cH_t$.
In particular, if $h_T$ is the output of \IWEBSV , then $L(h_T)-L(h^*) \leq 2 \Delta_{T-1} = \cO\big(\sqrt{\log(T/\delta) / T}\big).$
\end{restatable}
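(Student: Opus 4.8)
The plan is to derive both assertions --- and then the corollary --- from a single uniform deviation inequality for the importance-weighted empirical loss $\hat L_m(h):=\frac1m\sum_{s=1}^m\frac{Q_s}{p_s}\ell(h(\x_s),y_s)$; here I read the definition of $\cH_t$ causally, so that $\cH_t$ is obtained by thresholding $\hat L_{t-1}$ at level $\Delta_{t-1}$ and depends only on the first $t-1$ sampled points. The first step is to record that $\hat L_m$ is unbiased: conditioning on the history $\cF_{s-1}$ and on $(\x_s,y_s)$ the coin obeys $\E[Q_s/p_s\mid\cF_{s-1},\x_s,y_s]=1$ whenever $p_s>0$, while if $p_s=0$ all surviving hypotheses agree on $(\x_s,y_s)$ so that term contributes nothing; hence $\E[\hat L_m(h)]=L(h)$ and, for each fixed pair $(h,h')$, the increments $\frac{Q_s}{p_s}(\ell(h(\x_s),y_s)-\ell(h'(\x_s),y_s))-(L(h)-L(h'))$ form a martingale difference sequence.

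The heart of the argument is the deviation bound: on an event $E$ with $\Pr[E]\ge1-\delta$, for every $t$ and every pair $h,h'\in\cH_{t-1}$, $\big|(\hat L_{t-1}(h)-\hat L_{t-1}(h'))-(L(h)-L(h'))\big|\le\Delta_{t-1}$. The decisive point --- and the reason $p_s$ is defined as the largest in-class disagreement $\max_{f,g\in\cH_s}\ell(f(\x_s),y_s)-\ell(g(\x_s),y_s)$ --- is that this choice bounds the martingale increments despite the unbounded weights $1/p_s$: for $h,h'\in\cH_s$ we have $|\ell(h(\x_s),y_s)-\ell(h'(\x_s),y_s)|\le p_s$, whence $\big|\frac{Q_s}{p_s}(\ell(h(\x_s),y_s)-\ell(h'(\x_s),y_s))\big|\le Q_s\le1$ and each centered increment ranges over an interval of width at most $2$. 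I would then apply Azuma--Hoeffding; the constant $8$ in $\Delta_{t-1}$ is exactly calibrated so that, with increment range $2$, a single $(t,h,h')$ outcome violates the bound with probability at most $2\exp(-(t-1)\Delta_{t-1}^2/8)=\delta/(T(T+1)|\cH|^2)$, and a union bound over the at most $T$ rounds and the $|\cH|^2$ pairs yields $\Pr[E]\ge1-\delta$.

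The only delicate point in this step is that membership $h,h'\in\cH_s$ --- precisely what bounds the increments --- is a random, data-dependent event. I would resolve this using the nested structure $\cH=\cH_1\supseteq\cH_2\supseteq\cdots$: introduce the stopping time $\tau_{h,h'}=\min\{s:h\notin\cH_s\text{ or }h'\notin\cH_s\}$, apply Azuma to the stopped martingale (whose increments are bounded unconditionally), and note that on $\{h,h'\in\cH_{t-1}\}\subseteq\{\tau_{h,h'}\ge t\}$ the stopped and original sums coincide. I expect this range control --- together with the bookkeeping needed to make $p_s$ and $\cH_s$ properly adapted to the filtration --- to be the main obstacle; everything downstream is short.

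Granting $E$, both assertions follow by induction on $t$. The base case $h^*\in\cH_1=\cH$ is immediate; assuming $h^*\in\cH_{t-1}$, for every $h'\in\cH_{t-1}$ the deviation bound and optimality $L(h^*)\le L(h')$ give $\hat L_{t-1}(h^*)-\hat L_{t-1}(h')\le(L(h^*)-L(h'))+\Delta_{t-1}\le\Delta_{t-1}$, so $\hat L_{t-1}(h^*)\le\min_{h'\in\cH_{t-1}}\hat L_{t-1}(h')+\Delta_{t-1}$ and hence $h^*\in\cH_t$. For the loss gap, take any $f,g\in\cH_t\subseteq\cH_{t-1}$: the membership constraint for $f$ gives $\hat L_{t-1}(f)\le\min_{h'\in\cH_{t-1}}\hat L_{t-1}(h')+\Delta_{t-1}\le\hat L_{t-1}(g)+\Delta_{t-1}$, and combining this with the deviation bound $L(f)-L(g)\le(\hat L_{t-1}(f)-\hat L_{t-1}(g))+\Delta_{t-1}$ yields $L(f)-L(g)\le2\Delta_{t-1}$. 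Finally, since the output $h_T$ and $h^*$ both lie in $\cH_T$, applying the gap bound with $f=h_T,\,g=h^*$ gives $L(h_T)-L(h^*)\le2\Delta_{T-1}=\cO\big(\sqrt{\log(T/\delta)/T}\big)$, absorbing $\log|\cH|$ and constants into $\cO(\cdot)$.
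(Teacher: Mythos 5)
Your proposal is correct and follows essentially the same route as the paper: an Azuma-based deviation lemma for the importance-weighted loss differences (with the increments bounded by $2$ precisely because $p_s$ dominates in-class disagreements, and a union bound over pairs in $\cH$ and over rounds), followed by induction to show $h^*\in\cH_t$ and the $2\Delta_{t-1}$ gap for any $f,g\in\cH_t$. The only difference is that your stopped-martingale device ($\tau_{h,h'}$ with the nested version spaces) rigorously handles the data-dependent membership event that the paper's proof passes over with a bare union bound, so your write-up is, if anything, slightly more careful on that point.
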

 Unlike the distribution-specific and loss-specific theoretical guarantees proposed in the coreset literature, Theorem \ref{th:genbound} holds for any bounded loss function and general hypothesis classes.  If we ignore log terms and consider the more complex Kernel Density Estimator class of hypotheses, the coreset method of \cite{karninedo2019} requires $m = \cO(T)$ coreset samples in order to achieve an overall $\cO(1/\sqrt{T})$ generalization bound. As we will see in the next section, the required \IWEBS\ sampling rate can also be as high as $\cO(T)$, but critically is scaled by the best-in-class loss, which in favorable cases is significantly smaller than one.

\subsection{Sampling Rate bounds}\label{sec:samplerate}
\vspace{-0.5em}
\cite{hanneke2007bound} proves that the expected number of labeled examples needed to train a model in an active learning setting can be characterized in terms of the disagreement coefficient of the learning problem.  Later, \cite{beygelzimer2009importance} generalizes this notion to arbitrary loss functions, and in this work, we further generalize this for the subset selection setting.

Recall that the disagreement coefficient $\theta_\AL$ in \cite{beygelzimer2009importance} for the active learning setting is defined as
\vspace{-10pt}
\begin{align*}
\theta_\AL=\sup_{r\geq 0}\frac{\E_{\x\sim\cX}\left[\max_{h\in \cB_\AL(h^*,r)}\max_{y\in\cY}| \ell(h(x),y) - \ell(h^*(x),y)|\right]}{r},
\end{align*}
where $\cB_\AL(h^*,r)=\{h\in\cH:\rho_\AL(h,h^*)\leq r\}$ with $\rho_\AL(f,g)=\E_{\x\sim\cX}[\sup_{y\in\cY}|\ell(f(\x),y)-\ell(g(\x),y)|]$.
Informally, this coefficient quantifies how much disagreement there is among a set of classifiers that is close to the best-in-class hypothesis. In the subset selection setting, labels are available at sample time and, thus, we are able to define the following disagreement coefficient:
\begin{definition} Let $\rho_\IWALSone(f,g) =  \E_{(\x,y)\in \cD} [ | \ell(f(\x),y) - \ell(g(\x),y) | ]$ and  $\cB_\IWALSone(h^*,r)=\{ h\in \cH: \rho_\IWALSone(h,h^*)\leq r \}$ for $r\geq0$.
The disagreement coefficient in the subset selection setting is defined as
\vspace{-5pt}
\begin{align*}
\theta_\IWALSone=\sup_{r\geq 0}\frac{\E_{(\x,y)\sim\cD}\left[\max_{h\in \cB_\IWALSone(h^*,r)}| \ell(h(\x),y) - \ell(h^*(\x),y)|\right]}{r}.
\end{align*}
\end{definition}
\vspace{-10pt}
The main difference between the above coefficient and that of  \cite{beygelzimer2009importance} is that there is no supremum  over all label $y\in \cY$ both in the definition of the distance $\rho$ and the coefficient's numerator. Instead, the supremum is replaced with an expectation over the label space.

The following theorem leverages $\theta_\IWALSone$ to derive an upper bound on the expected number of selected examples for the \IWEBSV\ algorithm. Below, let $\cF_t=\{(\x_i,y_i, Q_i)\}_{i=1}^t$ be the observations of the algorithm up to time $t$.
\begin{restatable}{theorem}{coresetiwalsone}\label{th:coreset_iwals1}
For any $\delta>0$, with probability at least $1-\delta$, the expected sampling rate of the \IWEBSV \ algorithm is:
$
\sum_{t=1}^T\E_{(\x_t,y_t)\sim\cD} \bs{p_t\big |\cF_{t-1}}
=\cO\Big(\theta_\IWALSone \br{ L(h^*) T +  \sqrt{T \log({T/\delta})}}\Big).
$
\end{restatable}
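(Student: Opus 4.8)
The plan is to bound the per-step sampling probability $p_t$ by a disagreement quantity over the version space, pass to conditional expectations, and then control the result through the disagreement coefficient $\theta_\IWALSone$. Throughout I would condition on the high-probability event of Theorem~\ref{th:genbound}, on which $h^*\in\cH_t$ for every $t$ and the diameter bound $L(f)-L(g)\leq 2\Delta_{t-1}$ holds for all $f,g\in\cH_t$.

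First I would dispose of the apparent circularity that $p_t=\max_{f,g\in\cH_t}\ell(f(\x_t),y_t)-\ell(g(\x_t),y_t)$ is defined through $\cH_t$, which itself depends on $p_t$. Since $\cH_t\subseteq\cH_{t-1}$ by construction, I can upper bound
\begin{align*}
p_t\leq\max_{f,g\in\cH_{t-1}}\bs{\ell(f(\x_t),y_t)-\ell(g(\x_t),y_t)}\leq 2\max_{h\in\cH_{t-1}}\abs{\ell(h(\x_t),y_t)-\ell(h^*(\x_t),y_t)},
\end{align*}
where the second step inserts $h^*\in\cH_{t-1}$ and uses the triangle inequality. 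The payoff is that $\cH_{t-1}$ is $\cF_{t-1}$-measurable while $(\x_t,y_t)$ is an independent fresh draw, so taking the conditional expectation cleanly yields
\begin{align*}
\E_{(\x_t,y_t)\sim\cD}\bs{p_t\mid\cF_{t-1}}\leq 2\,\E_{(\x,y)\sim\cD}\bs{\max_{h\in\cH_{t-1}}\abs{\ell(h(\x),y)-\ell(h^*(\x),y)}}.
\end{align*}

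Next I would locate $\cH_{t-1}$ inside a disagreement ball. The main inequality here is that for losses in $[0,1]$ one has $\abs{\ell(h(\x),y)-\ell(h^*(\x),y)}\leq\ell(h(\x),y)+\ell(h^*(\x),y)$, so $\rho_\IWALSone(h,h^*)\leq L(h)+L(h^*)$. Combining this with the diameter bound $L(h)-L(h^*)\leq 2\Delta_{t-2}$ (Theorem~\ref{th:genbound} applied to $\cH_{t-1}$) gives $\rho_\IWALSone(h,h^*)\leq 2\Delta_{t-2}+2L(h^*)=:r_t$ for every $h\in\cH_{t-1}$, hence $\cH_{t-1}\subseteq\cB_\IWALSone(h^*,r_t)$. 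Applying the definition of $\theta_\IWALSone$ at radius $r_t$ then bounds the displayed expectation by $2\theta_\IWALSone r_t=4\theta_\IWALSone\br{\Delta_{t-2}+L(h^*)}$.

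Finally I would sum over $t$. The term $\sum_t L(h^*)$ contributes $L(h^*)T$, while $\sum_{t=1}^T\Delta_{t-2}=\sqrt{8\log(2T(T+1)|\cH|^2/\delta)}\sum_t (t-2)^{-1/2}=\cO\br{\sqrt{T\log(T/\delta)}}$ using $\sum_{s}s^{-1/2}=\cO(\sqrt{T})$ and absorbing $\log|\cH|$ into the log factor; the degenerate initial terms (where $\Delta_{t-2}$ is undefined) are handled by the trivial bound $p_t\leq 1$. This produces the claimed $\cO\br{\theta_\IWALSone\br{L(h^*)T+\sqrt{T\log(T/\delta)}}}$. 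I expect the main obstacle to be the circular dependence between $p_t$ and $\cH_t$; resolving it via the monotonicity $\cH_t\subseteq\cH_{t-1}$, rather than trying to analyze $\cH_t$ directly, is what makes the conditional-expectation step legitimate. A secondary but conceptually important point is that the $L(h^*)$ factor enters only through the crude bound $\abs{a-b}\le a+b$, which is precisely what scales the sampling rate by the best-in-class loss and yields the improvement over a worst-case $\cO(T)$ rate highlighted after Theorem~\ref{th:genbound}.
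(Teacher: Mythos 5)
Your proof is correct and follows essentially the same route as the paper's: bound $p_t$ by twice the disagreement with $h^*$, embed the version space in the ball $\cB_\IWALSone(h^*,\,2L(h^*)+2\Delta)$ via $\rho_\IWALSone(h,h^*)\leq L(h)+L(h^*)$ together with Theorem~\ref{th:genbound}, apply the definition of $\theta_\IWALSone$, and sum over $t$. The only deviation is that you relax to $\cH_{t-1}$ (yielding $\Delta_{t-2}$ in place of $\Delta_{t-1}$) to avoid a circularity which, per the appendix pseudocode, is not actually present---$\cH_t$ is computed from the first $t-1$ samples only and is therefore $\cF_{t-1}$-measurable, so the paper conditions on $\cF_{t-1}$ and works with $\cH_t$ directly---and this shift is immaterial to the final bound.
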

Suppressing lower order terms, the above expected sampling rate bound is small whenever the product of the disagreement coefficient and the expected loss of the best-in-class is small.  In such cases, by combining the above theorem with the generalization guarantee, it holds that \IWEBSV\ returns a hypothesis trained on a only fraction of the points that generalizes as well as a hypothesis trained on the full dataset of size $T$.  Theorem \ref{th:coreset_iwals1} can be further improved by adapting the ideas found in \citet{cortesdesalvogentilmohrizhang2019} to the \IWEBSV \ algorithm. See Appendix~\ref{sec:enhanced} for this enhanced analysis.

The form of this sampling rate bound is similar to that of \cite{beygelzimer2009importance}. More concretely, under the assumption that a loss function has bounded slope asymmetry, that is  $K_{\ell} = \sup_{z,z'\in\cZ}\frac{\max_{y\in\cY}|\ell(z,y)-\ell(z',y)|}{\min_{y\in\cY}|\ell(z,y)-\ell(z',y)|}$  is bounded, with probability at least $1-\delta$, the expected number of examples selected by the IWAL algorithm is given by $\cO\br{\theta_\AL K_{\ell} \br{ L(h^*)T+  \sqrt{T \log({T/\delta})}}}$. Thus, the main difference between the sampling rate bound of the IWAL algorithm and  the \IWEBSV\ algorithm are the factors that depends on the two disagreement coefficients:  $\theta_\AL K_{\ell}$ and $\theta_\IWALSone$. Since $\theta_\IWALSone$ leverages the label information we may expect it to provide a tighter bound, compared to using the label-independent disagreement $\theta_\AL$. Theorem~\ref{th:coresetiwals1_comp} shows that this is indeed the case.

\begin{restatable}{theorem}{coresetiwalsonecomp}\label{th:coresetiwals1_comp}
If the loss function has a bounded slope asymmetry $K_{\ell}$,
then $\theta_\IWALSone \leq\theta_\AL K_{\ell}$.
\end{restatable}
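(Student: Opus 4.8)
The plan is to reduce the claimed inequality between disagreement coefficients to two elementary comparisons: one between the two pseudometrics $\rho_\IWALSone$ and $\rho_\AL$, and one between the numerators appearing in the definitions of $\theta_\IWALSone$ and $\theta_\AL$. The slope asymmetry $K_\ell$ is exactly the quantity that lets us pass between ``single-label'' disagreement (integrated against the conditional $y\mid\x$) and ``worst-case over labels'' disagreement, and I expect it to enter in precisely two places: once to nest the disagreement balls, and once as a scale factor that is recovered by a change of variables in the supremum over $r$.

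First I would compare the metrics. Fix $f,g\in\cH$ and $\x\in\cX$, and write $z=f(\x)$, $z'=g(\x)$. By the definition of slope asymmetry, $\max_{y}|\ell(z,y)-\ell(z',y)| \le K_\ell \min_{y}|\ell(z,y)-\ell(z',y)|$, so for every label $y$ we have $|\ell(z,y)-\ell(z',y)| \ge \min_{y'}|\ell(z,y')-\ell(z',y')| \ge \tfrac{1}{K_\ell}\max_{y'}|\ell(z,y')-\ell(z',y')|$. Taking the conditional expectation over $y\mid\x$ and then the expectation over $\x$ gives $\rho_\IWALSone(f,g)\ge \tfrac{1}{K_\ell}\,\rho_\AL(f,g)$, i.e.\ $\rho_\AL(f,g)\le K_\ell\,\rho_\IWALSone(f,g)$. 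Consequently, if $\rho_\IWALSone(h,h^*)\le r$ then $\rho_\AL(h,h^*)\le K_\ell r$, which yields the ball nesting $\cB_\IWALSone(h^*,r)\subseteq \cB_\AL(h^*,K_\ell r)$.

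Next I would bound the numerator of $\theta_\IWALSone$ at radius $r$. For each $(\x,y)$ and each $h\in\cB_\IWALSone(h^*,r)$, the single-label term is dominated by its worst-case-over-labels counterpart, $|\ell(h(\x),y)-\ell(h^*(\x),y)|\le \max_{y'}|\ell(h(\x),y')-\ell(h^*(\x),y')|$; enlarging the ball using the nesting from the previous step, the maximum over $h\in\cB_\IWALSone(h^*,r)$ is at most the maximum over the superset $h\in\cB_\AL(h^*,K_\ell r)$. The resulting bound no longer depends on $y$, so the outer expectation over $(\x,y)\sim\cD$ collapses to an expectation over $\x$, and we recognize exactly the numerator of $\theta_\AL$ evaluated at radius $K_\ell r$. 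Writing $N_\bullet(\cdot)$ for these numerators, this says $N_\IWALSone(r)\le N_\AL(K_\ell r)$. Finally, dividing by $r$, taking $\sup_{r\ge0}$, and substituting $s=K_\ell r$ (legitimate since $K_\ell$ is a finite positive constant, so $s$ ranges over $[0,\infty)$ as $r$ does) gives $\theta_\IWALSone \le \sup_{r\ge0} N_\AL(K_\ell r)/r = K_\ell \sup_{s\ge0} N_\AL(s)/s = K_\ell\,\theta_\AL$, as claimed.

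I do not anticipate a serious obstacle here; the only points requiring care are bookkeeping ones: making sure the slope-asymmetry inequality is applied at the $\min$ over labels (not at a single fixed label) so that the metric comparison has the correct direction, and verifying that the change of variables $s=K_\ell r$ maps $[0,\infty)$ onto itself so the supremum is genuinely rescaled by $K_\ell$ rather than merely bounded. Both are immediate once $1\le K_\ell<\infty$ is noted.
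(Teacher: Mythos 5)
Your proof is correct and follows essentially the same route as the paper's: establish the ball nesting $\cB_\IWALSone(h^*,r)\subseteq \cB_\AL(h^*,K_\ell r)$ from the slope-asymmetry bound, dominate the single-label numerator by the worst-case-over-labels numerator, and finish with the change of variables $s=K_\ell r$ in the supremum. If anything, your pointwise application of the slope asymmetry (bounding $\max_{y}|\ell(z,y)-\ell(z',y)|\leq K_\ell\,|\ell(z,y)-\ell(z',y)|$ for every $y$ and then integrating) is cleaner than the paper's own chain, which routes through a ratio-of-expectations step of the form $\E[A/B]\geq \E[A]/\E[B]$ that is not valid in general --- the paper's conclusion survives exactly because of the pointwise argument you gave.
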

The above theorem in conjunction with the sampling rate guarantees thus proves that the sampling rate bound of \IWEBS\ of Theorem~\ref{th:coreset_iwals1} is tighter than the sampling rate bound of the IWAL algorithm.

\section{Conclusion}
In this paper we have introduced a subset selection algorithm, \IWEBS\, that is designed for arbitrary hypothesis classes including deep networks. We have shown that the \IWEBS\ algorithm outperforms several natural and important baselines across multiple datasets. In addition, we have developed an initial theoretical motivation for our approach based on the importance weighted sampling mechanism.  A natural next step is enforcing a notion of diversity as it will likely provide improved performance in the large-batch sampling setting and thus, we plan to adapt the diversity-based method in \cite{citovsky2021} by replacing the uncertainty sampling component with the \IWEBS\ algorithm.

\bibliography{ref}

\begin{thebibliography}{36}
\providecommand{\natexlab}[1]{#1}
\providecommand{\url}[1]{\texttt{#1}}
\expandafter\ifx\csname urlstyle\endcsname\relax
  \providecommand{\doi}[1]{doi: #1}\else
  \providecommand{\doi}{doi: \begingroup \urlstyle{rm}\Url}\fi

\bibitem[Agarwal et~al.(2005)Agarwal, Har-Peled, Varadarajan,
  et~al.]{agarwal2005geometric}
Pankaj~K Agarwal, Sariel Har-Peled, Kasturi~R Varadarajan, et~al.
\newblock Geometric approximation via coresets.
\newblock \emph{Combinatorial and computational geometry}, 2005.

\bibitem[Amin et~al.(2020)Amin, Cortes, DeSalvo, and
  Rostamizadeh]{amindesalvorostami2020}
Kareem Amin, Corinna Cortes, Giulia DeSalvo, and Afshin Rostamizadeh.
\newblock Understanding the effects of batching in online active learning.
\newblock In \emph{International Conference on Artificial Intelligence and
  Statistics}, 2020.

\bibitem[Argamon-Engelson \& Dagan(1999)Argamon-Engelson and
  Dagan]{argamon1999committee}
Shlomo Argamon-Engelson and Ido Dagan.
\newblock Committee-based sample selection for probabilistic classifiers.
\newblock \emph{Journal of Artificial Intelligence Research}, 1999.

\bibitem[Ash et~al.(2019)Ash, Zhang, Krishnamurthy, Langford, and
  Agarwal]{ash2019deep}
Jordan~T Ash, Chicheng Zhang, Akshay Krishnamurthy, John Langford, and Alekh
  Agarwal.
\newblock Deep batch active learning by diverse, uncertain gradient lower
  bounds.
\newblock In \emph{International Conference on Learning Representations}, 2019.

\bibitem[Beygelzimer et~al.(2009)Beygelzimer, Dasgupta, and
  Langford]{beygelzimer2009importance}
Alina Beygelzimer, Sanjoy Dasgupta, and John Langford.
\newblock Importance weighted active learning.
\newblock In \emph{International conference on machine learning}, 2009.

\bibitem[Citovsky et~al.(2021)Citovsky, DeSalvo, Gentile, Karydas, Rajagopalan,
  Rostamizadeh, and Kumar]{citovsky2021}
Gui Citovsky, Giulia DeSalvo, Claudio Gentile, Lazaros Karydas, Anand
  Rajagopalan, Afshin Rostamizadeh, and Sanjiv Kumar.
\newblock Batch active learning at scale.
\newblock In \emph{Advances in Neural Information Processing Systems}, 2021.

\bibitem[Cortes et~al.(2019)Cortes, DeSalvo, Gentile, Mohri, and
  Zhang]{cortesdesalvogentilmohrizhang2019}
Corinna Cortes, Giulia DeSalvo, Claudio Gentile, Mehryar Mohri, and Ningshan
  Zhang.
\newblock Region-based active learning.
\newblock In \emph{International Conference on Artificial Intelligence and
  Statistics}, 2019.

\bibitem[Culotta \& McCallum(2005)Culotta and McCallum]{culotta2005reducing}
Aron Culotta and Andrew McCallum.
\newblock Reducing labeling effort for structured prediction tasks.
\newblock In \emph{Association for the Advancement of Artificial Intelligence},
  2005.

\bibitem[Curtin et~al.(2019)Curtin, Im, Moseley, Pruhs, and
  Samadian]{Curtin_2019}
Ryan~R Curtin, Sungjin Im, Ben Moseley, Kirk Pruhs, and Alireza Samadian.
\newblock On coresets for regularized loss minimization.
\newblock \emph{arXiv preprint arXiv:1905.10845}, 2019.

\bibitem[Feldman et~al.(2010)Feldman, Monemizadeh, Sohler, and
  Woodruff]{feldman2010coresets}
Dan Feldman, Morteza Monemizadeh, Christian Sohler, and David~P Woodruff.
\newblock Coresets and sketches for high dimensional subspace approximation
  problems.
\newblock In \emph{Proceedings of the twenty-first annual ACM-SIAM symposium on
  Discrete Algorithms}, 2010.

\bibitem[Guo et~al.(2022)Guo, Zhao, and Bai]{guo2022deepcore}
Chengcheng Guo, Bo~Zhao, and Yanbing Bai.
\newblock Deepcore: A comprehensive library for coreset selection in deep
  learning.
\newblock \emph{Database and Expert Systems Applications}, 2022.

\bibitem[Hanneke(2007)]{hanneke2007bound}
Steve Hanneke.
\newblock A bound on the label complexity of agnostic active learning.
\newblock In \emph{International conference on machine learning}, 2007.

\bibitem[Har-Peled \& Mazumdar(2004)Har-Peled and Mazumdar]{har2004coresets}
Sariel Har-Peled and Soham Mazumdar.
\newblock On coresets for k-means and k-median clustering.
\newblock In \emph{Proceedings of the thirty-sixth annual ACM symposium on
  Theory of computing}, 2004.

\bibitem[Helber et~al.(2019)Helber, Bischke, Dengel, and
  Borth]{helber2019eurosat}
Patrick Helber, Benjamin Bischke, Andreas Dengel, and Damian Borth.
\newblock Eurosat: A novel dataset and deep learning benchmark for land use and
  land cover classification.
\newblock \emph{IEEE Journal of Selected Topics in Applied Earth Observations
  and Remote Sensing}, 2019.

\bibitem[Hendrycks \& Dietterich(2019)Hendrycks and
  Dietterich]{hendrycks2018benchmarking}
Dan Hendrycks and Thomas Dietterich.
\newblock Benchmarking neural network robustness to common corruptions and
  perturbations.
\newblock In \emph{International Conference on Learning Representations}, 2019.

\bibitem[Ionides(2008)]{ionides2008truncated}
Edward~L Ionides.
\newblock Truncated importance sampling.
\newblock \emph{Journal of Computational and Graphical Statistics}, 2008.

\bibitem[Johnson \& Guestrin(2018)Johnson and Guestrin]{johnson2018training}
Tyler~B Johnson and Carlos Guestrin.
\newblock Training deep models faster with robust, approximate importance
  sampling.
\newblock In \emph{Advances in Neural Information Processing Systems}, 2018.

\bibitem[Karnin \& Liberty(2019)Karnin and Liberty]{karninedo2019}
Zohar Karnin and Edo Liberty.
\newblock Discrepancy, coresets, and sketches in machine learning.
\newblock In \emph{Conference on Learning Theory}, 2019.

\bibitem[Katharopoulos \& Fleuret(2018)Katharopoulos and
  Fleuret]{katharopoulos2018not}
Angelos Katharopoulos and Fran{\c{c}}ois Fleuret.
\newblock Not all samples are created equal: Deep learning with importance
  sampling.
\newblock In \emph{International conference on machine learning}, 2018.

\bibitem[Kaushal et~al.(2019)Kaushal, Iyer, Kothawade, Mahadev, Doctor, and
  Ramakrishnan]{kaushal2019learning}
Vishal Kaushal, Rishabh Iyer, Suraj Kothawade, Rohan Mahadev, Khoshrav Doctor,
  and Ganesh Ramakrishnan.
\newblock Learning from less data: A unified data subset selection and active
  learning framework for computer vision.
\newblock In \emph{2019 IEEE Winter Conference on Applications of Computer
  Vision (WACV)}, 2019.

\bibitem[Killamsetty et~al.(2021)Killamsetty, Durga, Ramakrishnan, De, and
  Iyer]{killamsetty2021grad}
Krishnateja Killamsetty, S~Durga, Ganesh Ramakrishnan, Abir De, and Rishabh
  Iyer.
\newblock Grad-match: Gradient matching based data subset selection for
  efficient deep model training.
\newblock In \emph{International Conference on Machine Learning}, 2021.

\bibitem[Krasin et~al.(2017)Krasin, Duerig, Alldrin, Ferrari, Abu-El-Haija,
  Kuznetsova, Rom, Uijlings, Popov, Veit, et~al.]{krasin2017openimages}
Ivan Krasin, Tom Duerig, Neil Alldrin, Vittorio Ferrari, Sami Abu-El-Haija,
  Alina Kuznetsova, Hassan Rom, Jasper Uijlings, Stefan Popov, Andreas Veit,
  et~al.
\newblock Openimages: A public dataset for large-scale multi-label and
  multi-class image classification, 2017.

\bibitem[Krizhevsky \& Hinton(2009)Krizhevsky and
  Hinton]{krizhevsky2009learning}
Alex Krizhevsky and Geoffrey Hinton.
\newblock Learning multiple layers of features from tiny images.
\newblock Technical report, University of Toronto, Toronto, Ontario, 2009.

\bibitem[Mai et~al.(2021)Mai, Musco, and Rao]{mai2021coresets}
Tung Mai, Cameron Musco, and Anup Rao.
\newblock Coresets for classification--simplified and strengthened.
\newblock In \emph{Advances in Neural Information Processing Systems}, 2021.

\bibitem[Mirzasoleiman et~al.(2020)Mirzasoleiman, Bilmes, and
  Leskovec]{mirzasoleiman2020coresets}
Baharan Mirzasoleiman, Jeff Bilmes, and Jure Leskovec.
\newblock Coresets for data-efficient training of machine learning models.
\newblock In \emph{International Conference on Machine Learning}, 2020.

\bibitem[Munteanu et~al.(2018)Munteanu, Schwiegelshohn, Sohler, and
  Woodruff]{Munteanu_NeurIPS2018}
Alexander Munteanu, Chris Schwiegelshohn, Christian Sohler, and David~P.
  Woodruff.
\newblock On coresets for logistic regression.
\newblock In \emph{Advances in Neural Information Processing Systems}, 2018.

\bibitem[Mussmann \& Liang(2018)Mussmann and Liang]{mussmann2018uncertainty}
Stephen Mussmann and Percy~S Liang.
\newblock Uncertainty sampling is preconditioned stochastic gradient descent on
  zero-one loss.
\newblock In \emph{Advances in Neural Information Processing Systems}, 2018.

\bibitem[Netzer et~al.(2011)Netzer, Wang, Coates, Bissacco, Wu, and
  Ng]{netzer2011reading}
Yuval Netzer, Tao Wang, Adam Coates, Alessandro Bissacco, Bo~Wu, and Andrew~Y
  Ng.
\newblock Reading digits in natural images with unsupervised feature learning.
\newblock 2011.

\bibitem[Paul et~al.(2021)Paul, Ganguli, and Dziugaite]{paul2021deep}
Mansheej Paul, Surya Ganguli, and Gintare~Karolina Dziugaite.
\newblock Deep learning on a data diet: Finding important examples early in
  training.
\newblock In \emph{Advances in Neural Information Processing Systems}, 2021.

\bibitem[Scheffer et~al.(2001)Scheffer, Decomain, and
  Wrobel]{scheffer2001active}
Tobias Scheffer, Christian Decomain, and Stefan Wrobel.
\newblock Active hidden markov models for information extraction.
\newblock In \emph{International Symposium on Intelligent Data Analysis}, 2001.

\bibitem[Sener \& Savarese(2018)Sener and Savarese]{sener2017active}
Ozan Sener and Silvio Savarese.
\newblock Active learning for convolutional neural networks: A core-set
  approach.
\newblock In \emph{International Conference on Learning Representations}, 2018.

\bibitem[Swaminathan \& Joachims(2015)Swaminathan and
  Joachims]{swaminathan2015counterfactual}
Adith Swaminathan and Thorsten Joachims.
\newblock Counterfactual risk minimization: Learning from logged bandit
  feedback.
\newblock In \emph{International Conference on Machine Learning}, 2015.

\bibitem[Turner et~al.(2021)Turner, Liu, and Rigollet]{turner2021statistical}
Paxton Turner, Jingbo Liu, and Philippe Rigollet.
\newblock A statistical perspective on coreset density estimation.
\newblock In \emph{International Conference on Artificial Intelligence and
  Statistics}, 2021.

\bibitem[Wei et~al.(2015)Wei, Iyer, and Bilmes]{wei2015submodularity}
Kai Wei, Rishabh Iyer, and Jeff Bilmes.
\newblock Submodularity in data subset selection and active learning.
\newblock In \emph{International conference on machine learning}, 2015.

\bibitem[Xiao et~al.(2017)Xiao, Rasul, and Vollgraf]{xiao2017fashion}
Han Xiao, Kashif Rasul, and Roland Vollgraf.
\newblock Fashion-mnist: a novel image dataset for benchmarking machine
  learning algorithms.
\newblock \emph{arXiv preprint arXiv:1708.07747}, 2017.

\bibitem[Yang \& Loog(2018)Yang and Loog]{yang2018benchmark}
Yazhou Yang and Marco Loog.
\newblock A benchmark and comparison of active learning for logistic
  regression.
\newblock \emph{Pattern Recognition}, 2018.

\end{thebibliography}
\bibliographystyle{iclr2023_conference}

\newpage
\appendix
\begin{center}
\Large {Supplementary Material}
\end{center}

\section{Extended Algorithmic and Experimental Details}\label{sec:extexp}

\subsection{Dataset Details}
Table~\ref{tab:datasets} and Table~\ref{tab:openimages} provide further details of the six multi-class classification datasets and the multi-label Open Images dataset, respectively.
\begin{table}[!htbp]
\centering
\begin{tabular}{c|c|c|c|c|c}
~ & Train & Test & \# Classes & Image Size & Description \\
\hline
CIFAR10 & 50,000 & 10,000 & 10 & 32$\times$32$\times$3 & classify the object in the image\\
SVHN & 73,257 & 26,032 & 10 & 32$\times$32$\times$3 & classify street view house number \\
CIFAR Corrupted & 7,614 & 2,386 & 10  & 32$\times$32$\times$3 & classify the corrupted object in the image\\
Eurosat & 8,000 & 5,000 & 10 & 64$\times$64$\times$3 & classify land use and land cover satelite image\\
Fashion MNIST & 60,000 & 10,000 & 10 & 32$\times$32$\times$3 & classify the type of clothes\\
CIFAR100 & 50,000 & 10,000 & 100 & 32$\times$32$\times$3 & classify the object in the image\\
\end{tabular}
\caption{Multi-class Classification Datasets statistics}
\label{tab:datasets}
\end{table}
\begin{table}[!htbp]
\centering
\begin{tabular}{c|c|c|c}
~ & Images & Positives & Negatives \\
\hline
Train & 9,011,219 & 19,856,086 & 37,668,266 \\
Validation & 41,620 & 367,263 & 228,076 \\
Test & 125,436 & 1,110,124 & 689,759 \\
\end{tabular}
\caption{Open Images Dataset v6 statistics by data split}
\label{tab:openimages}
\end{table}

\subsection{More Experiment Details and Results}

When running \IWEBS, at each iteration, we pass over the labeled pool sequentially in a uniform random order, removing each selected example from the pool. If we exhaust the entire pool before selecting $k$ examples, we start again at the beginning of the sequence and iterate over the remaining examples. In order to reduce the number of passes required for the smaller datasets, we scale the sampling probabilities of the remaining points uniformly by $1 + \frac{j}{10}$, where $j$ is the number of passes so far.

Figure~\ref{fig:vgg16-iwal-additional} compares \IWEBS-dis and \IWEBS-ent for additional three datasets that we omit in Section~\ref{sec:exp}.

\begin{figure}[!htbp]
\begin{tabular}{ccc}
\includegraphics[width=0.32\textwidth]{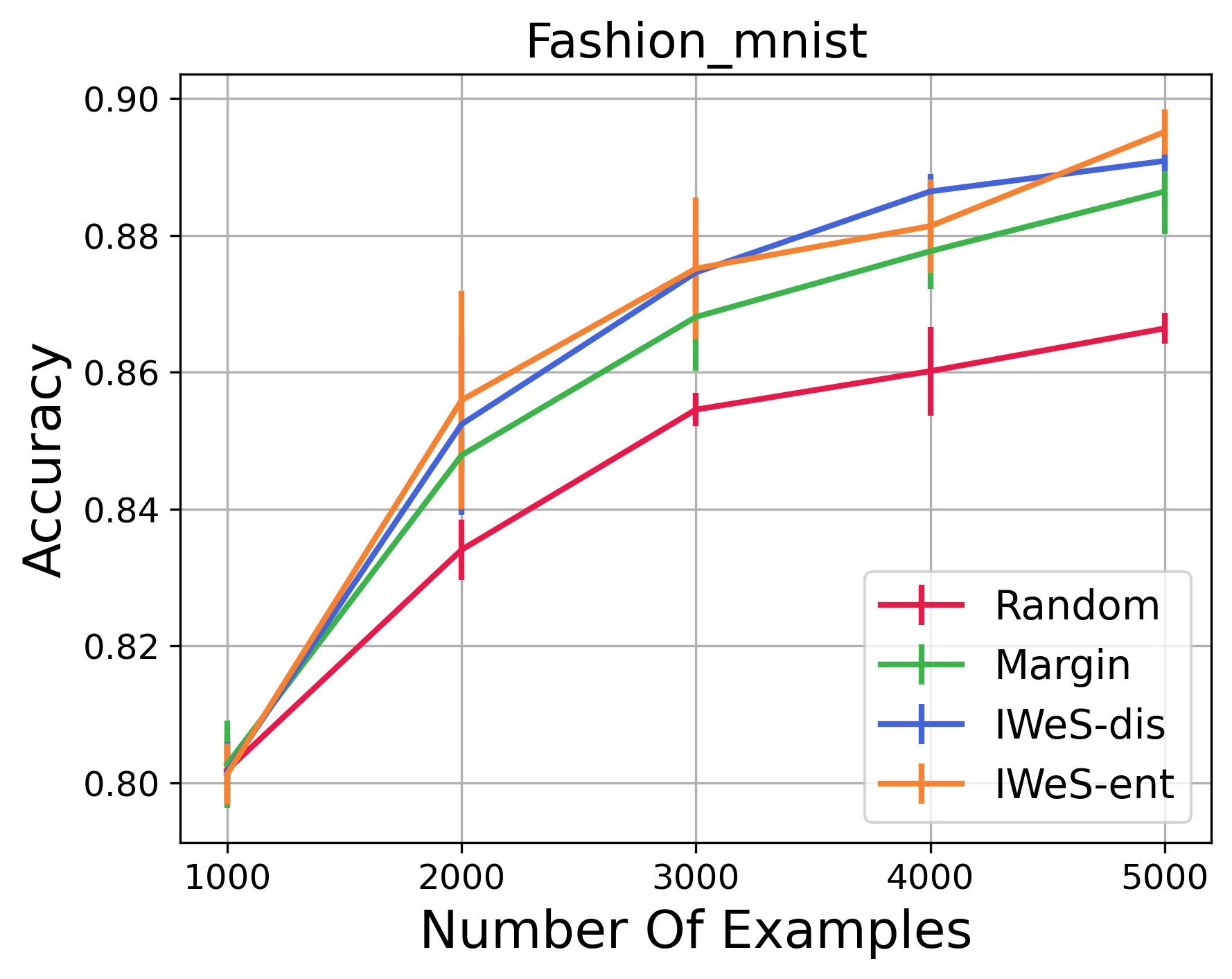}
&
\includegraphics[width=0.32\textwidth]{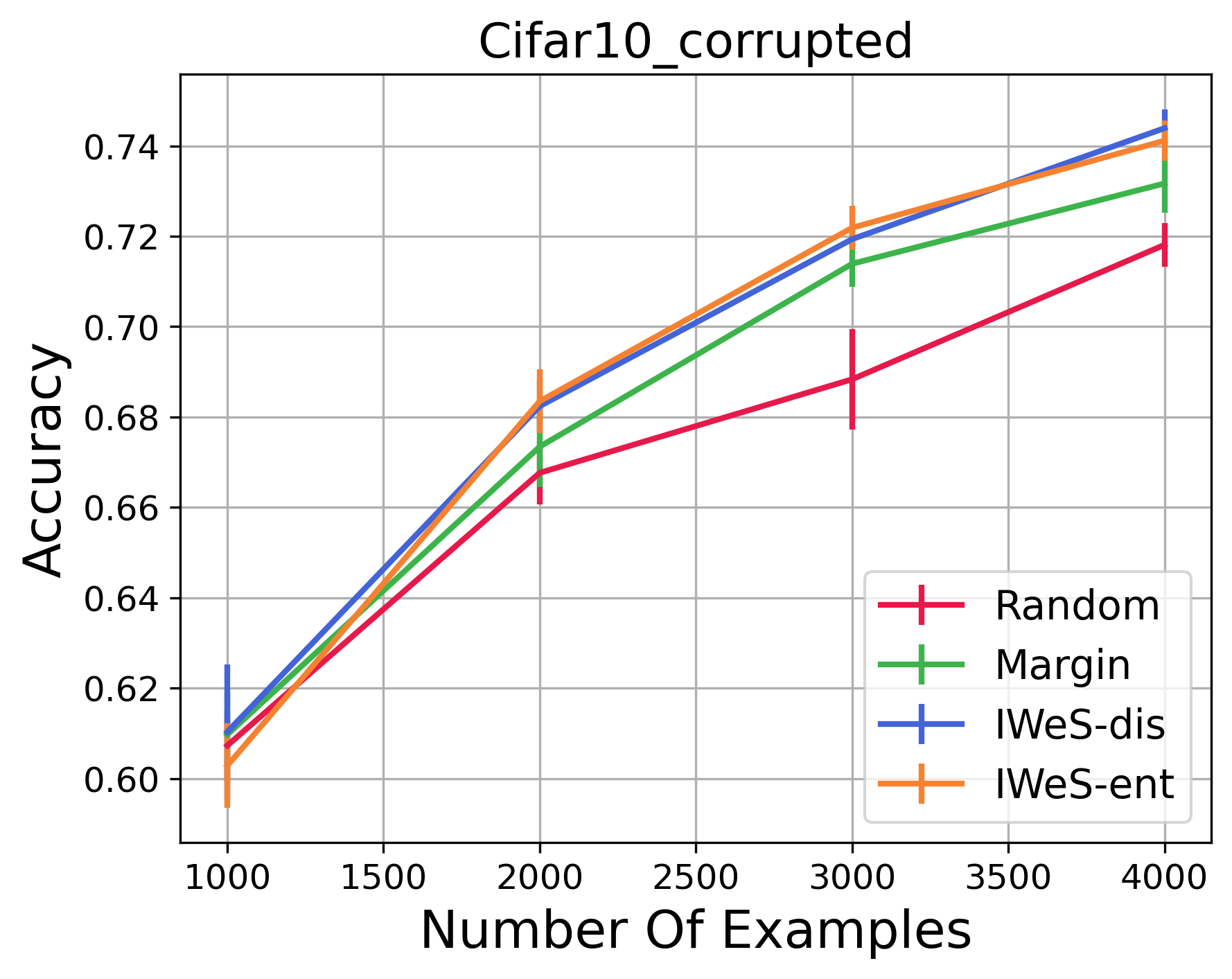}
&
\includegraphics[width=0.32\textwidth]{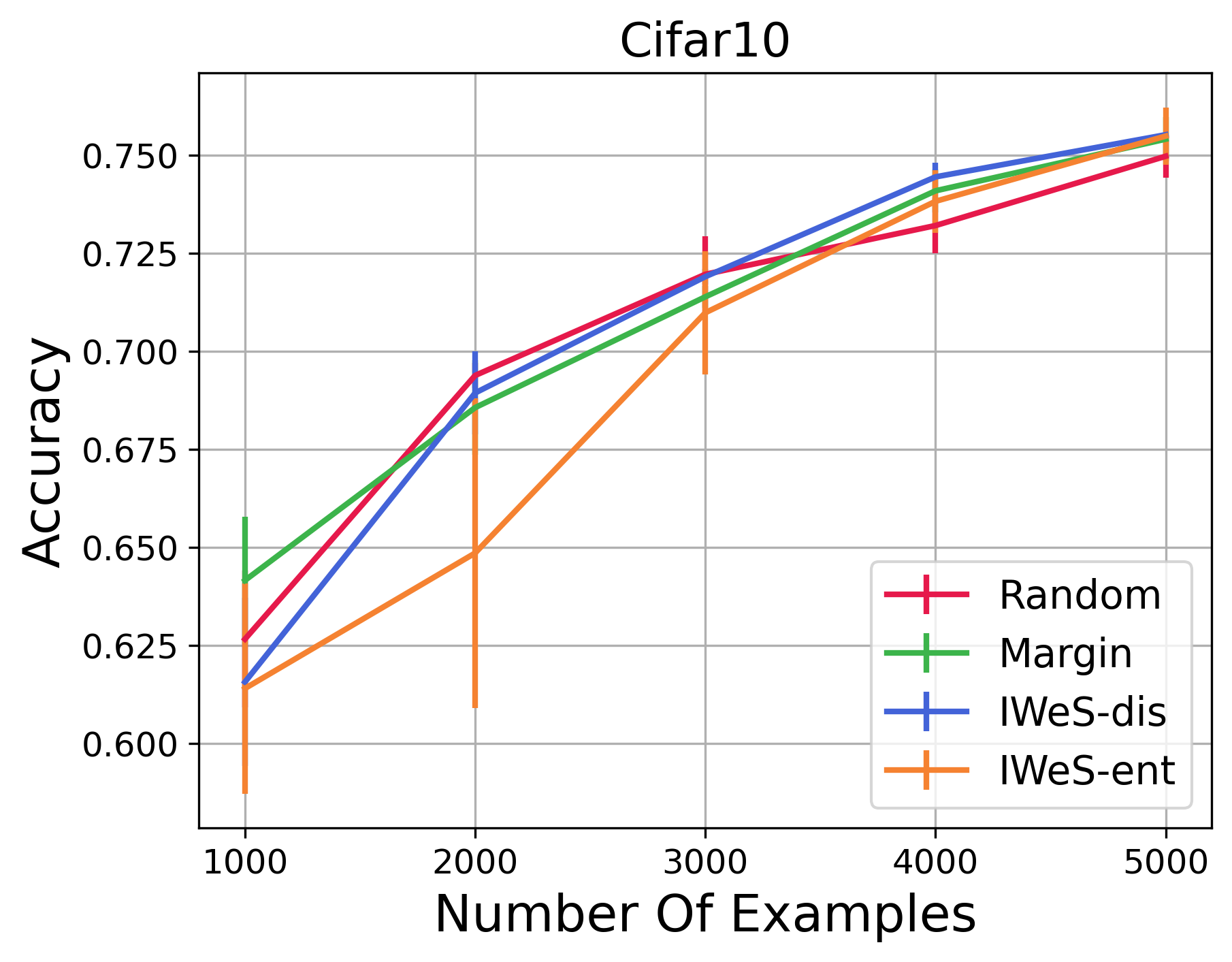}
\end{tabular}
\caption{Accuracy of VGG16 trained on examples selected by \IWEBS-ent, \IWEBS-dis, margin sampling and random sampling.
}
\label{fig:vgg16-iwal-additional}
\end{figure}

\subsection{Efficient version of entropy-based disagreement}
Here we develop an efficient version of entropy-based disagreement which we call  \IWEBS-loss by defining the sampling probability to be proportional to the model's entropy restricted to the labeled examples $(\x,y)$ as follows:
\begin{align}\label{eq:loss_strategy}
p(\x,y)=-\P_{f_{r}}(y|\x)\log{\P_{f_{r}}(y|\x)}.
\end{align}
where $\P_{f_{r}}(y|\x)$ is the probability of class $y$ with model $f_{r}$ given example $\x$. As $\P_{f_{r}}(y|\x)$ increase from 0 to 1, this sampling probability first increase then decrease. Thus the sampling probability is high whenever the model is not confident about the model prediction. Unlike \IWEBS-dis, this definition only requires training one model, thereby saving some computational cost.
 
Figure~\ref{fig:vgg16-iwesloss} compares the two variants \IWEBS-dis and \IWEBS-loss. We find that the performance are similar across all datasets, with \IWEBS-loss behave slightly better than \IWEBS-dis at the first two sample iterations (i.e. Fashion MNIST, CIFAR10, Eurosat), and \IWEBS-dis slightly outperform \IWEBS-loss as the number of sample iteration increases (i.e. SVHN, Eurosat).

\begin{figure}[!htbp]
\begin{tabular}{ccc}
\includegraphics[width=0.32\textwidth]{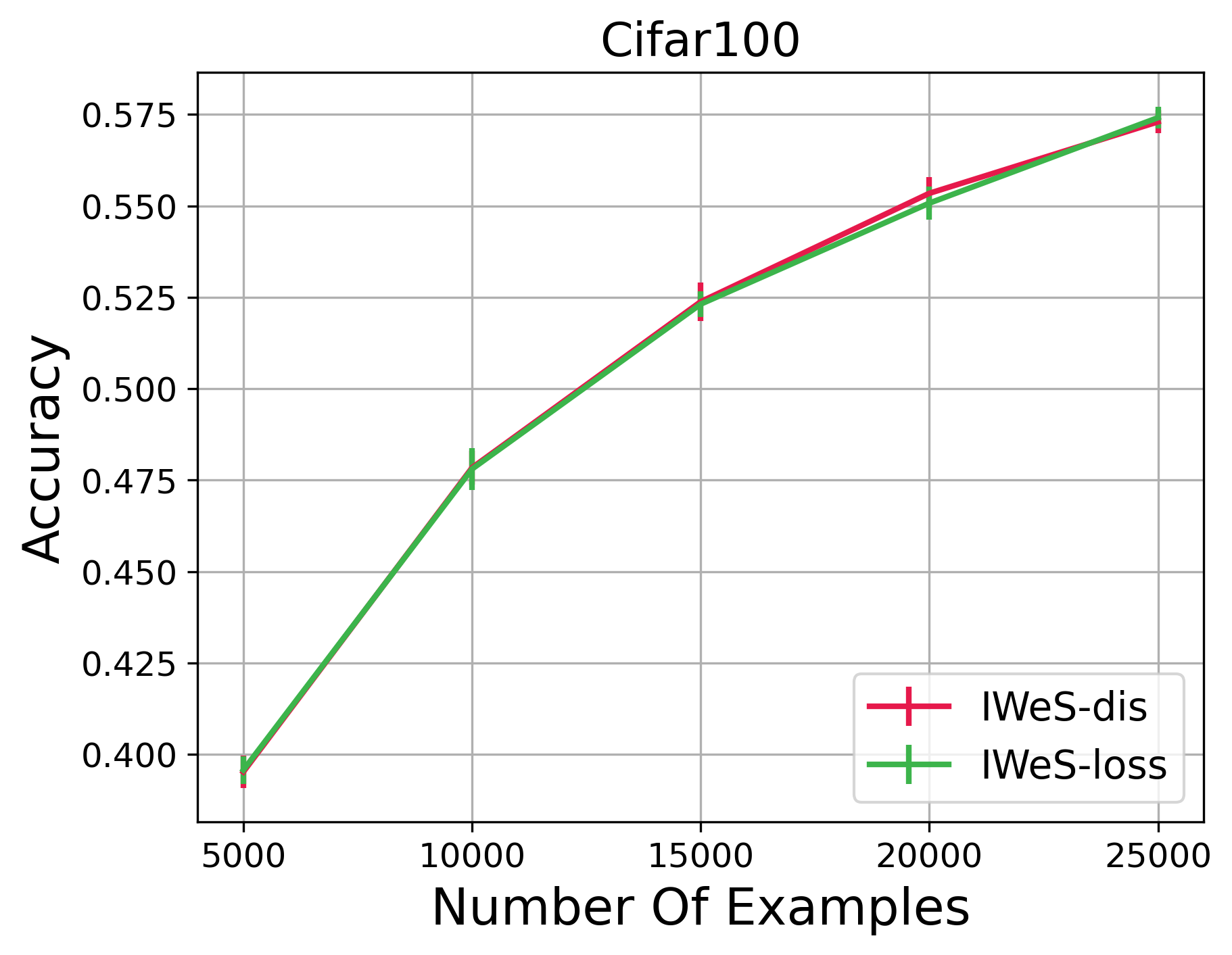}
&
\includegraphics[width=0.32\textwidth]{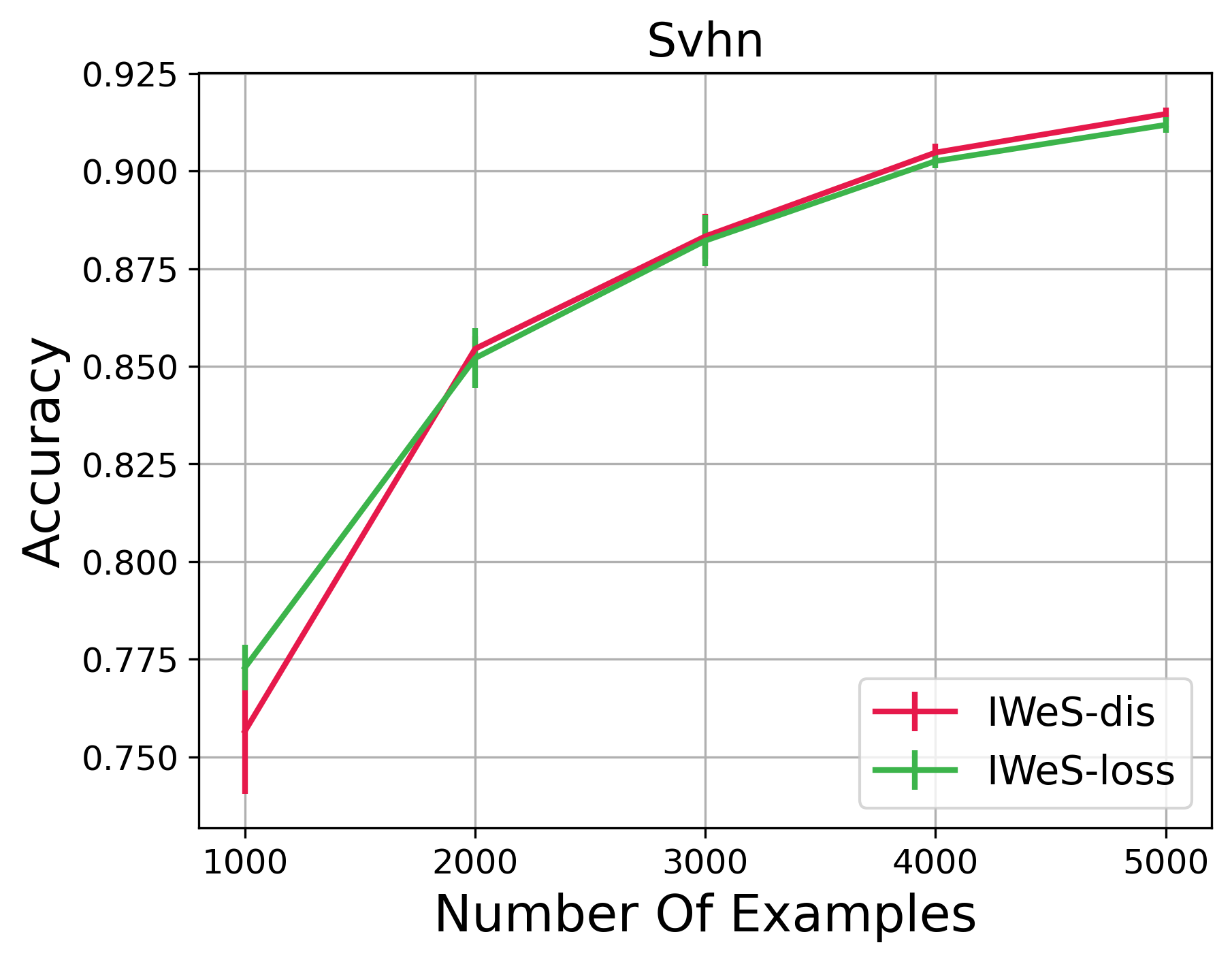}
&
\includegraphics[width=0.32\textwidth]{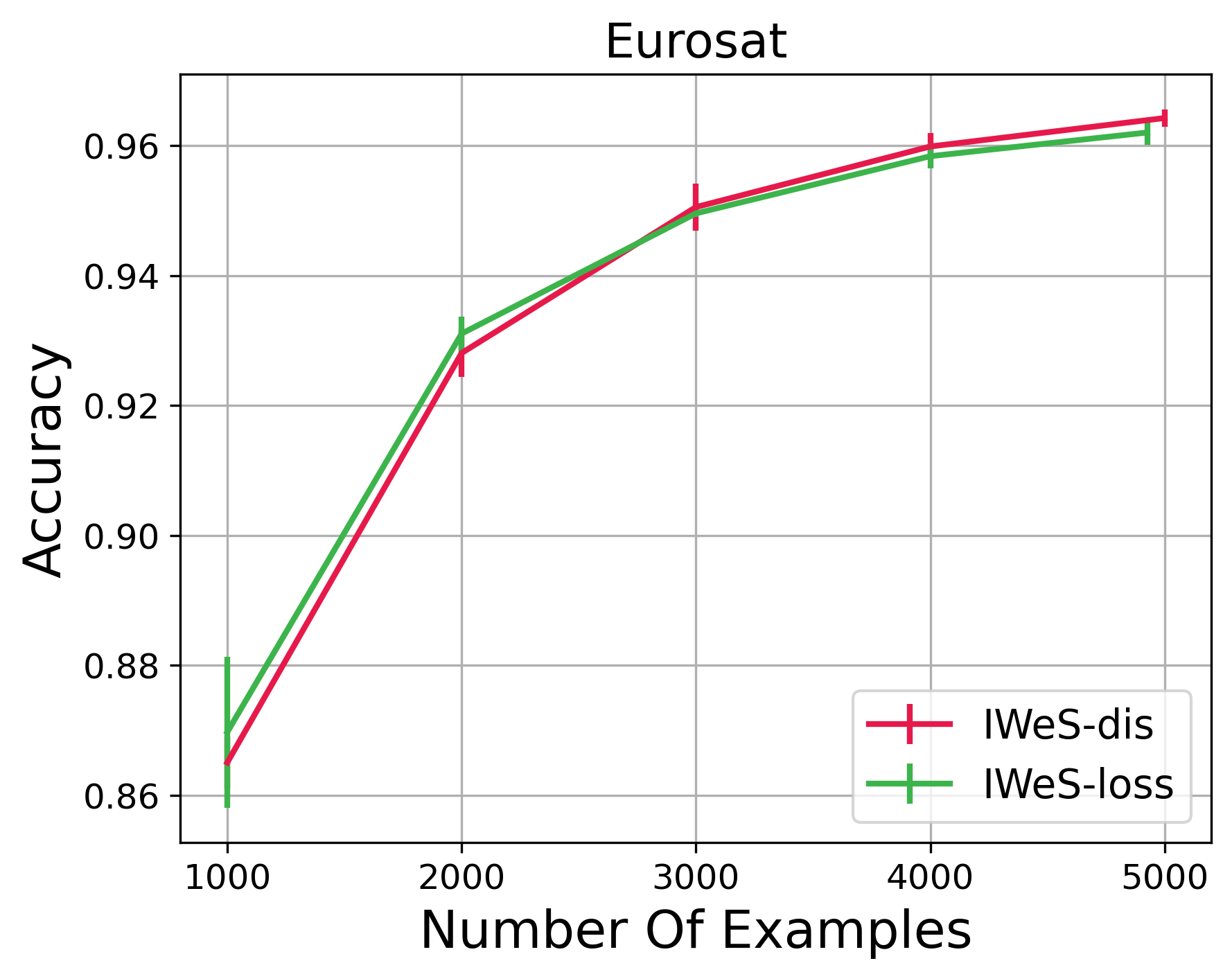}\\
\includegraphics[width=0.32\textwidth]{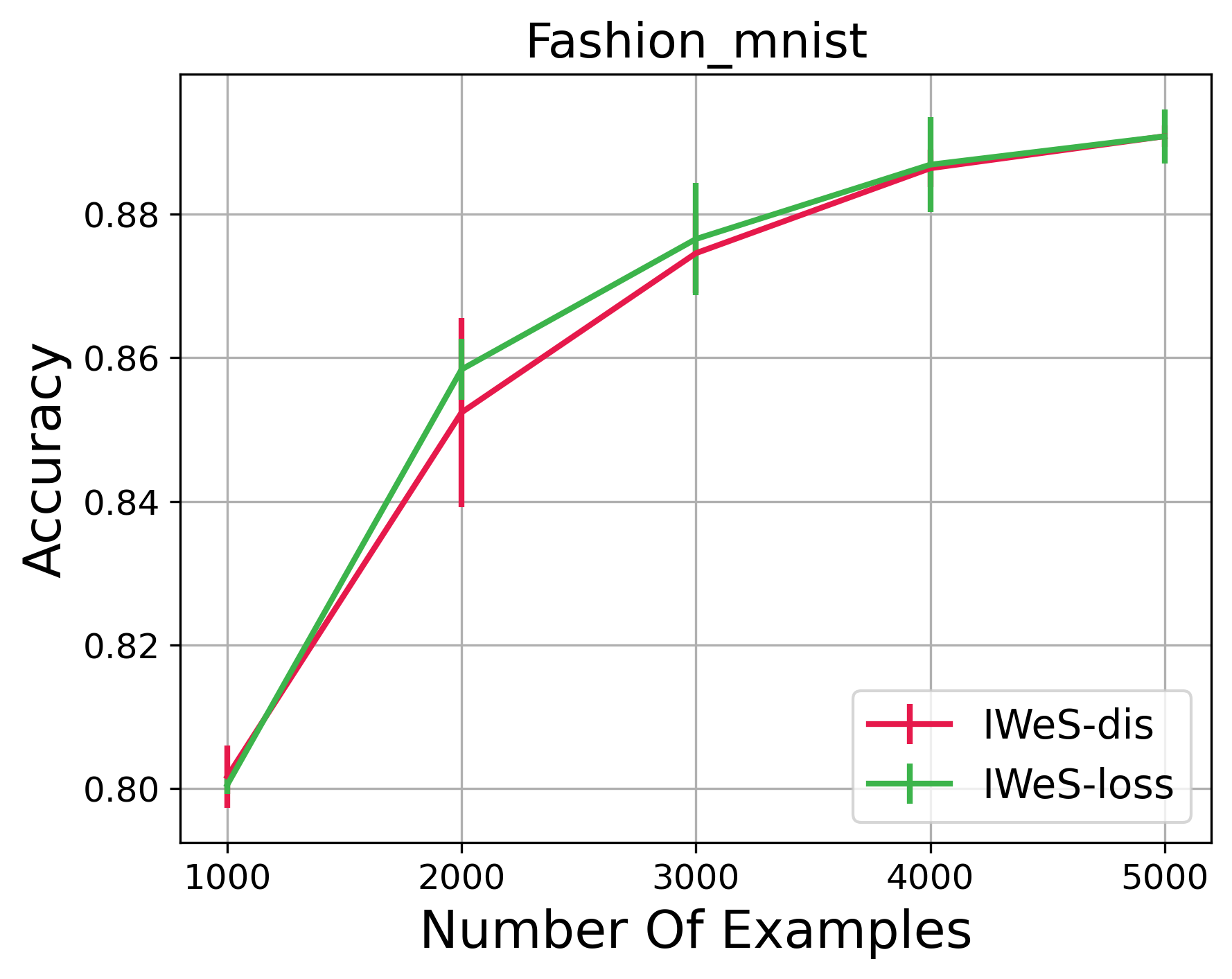}
&
\includegraphics[width=0.32\textwidth]{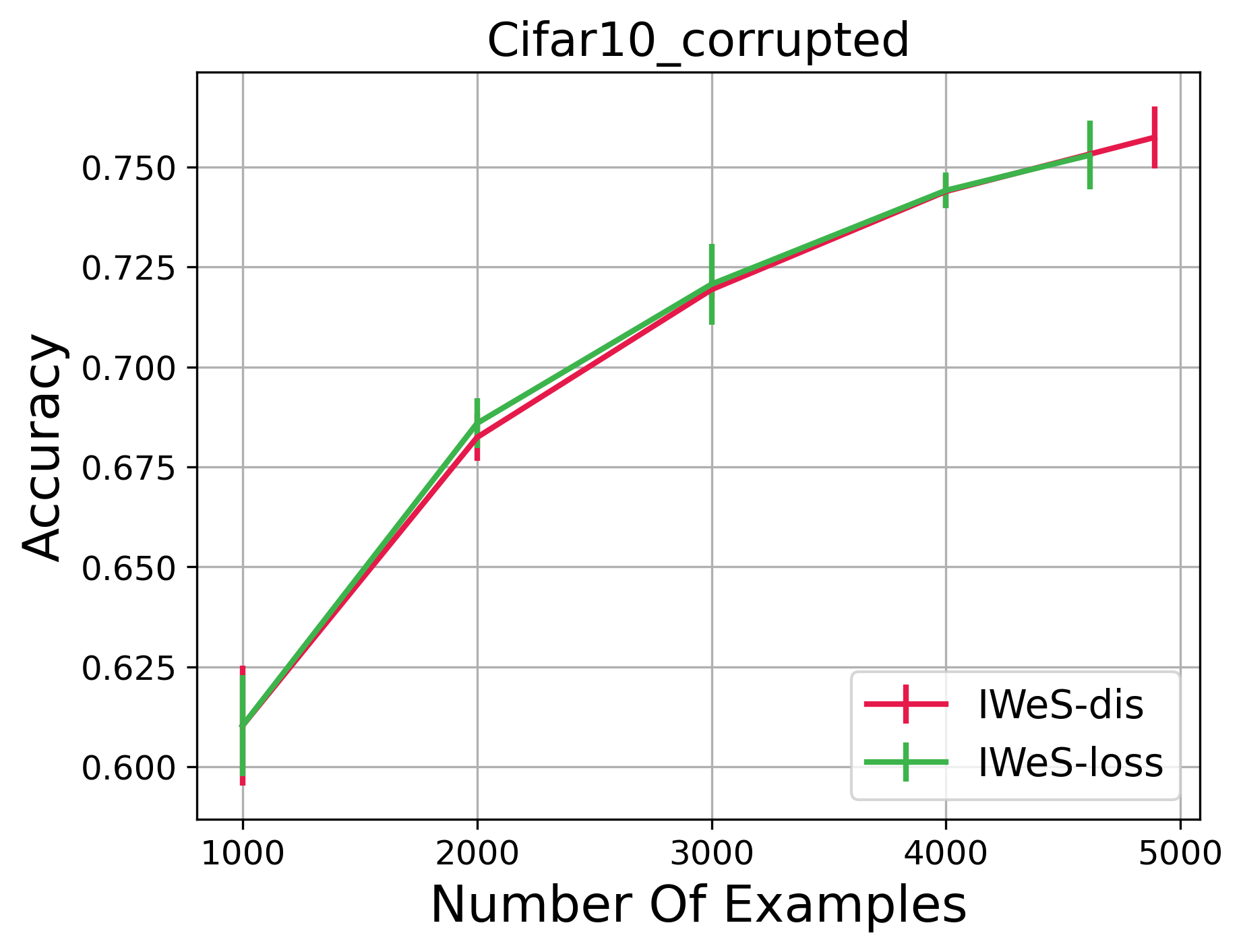}
&
\includegraphics[width=0.32\textwidth]{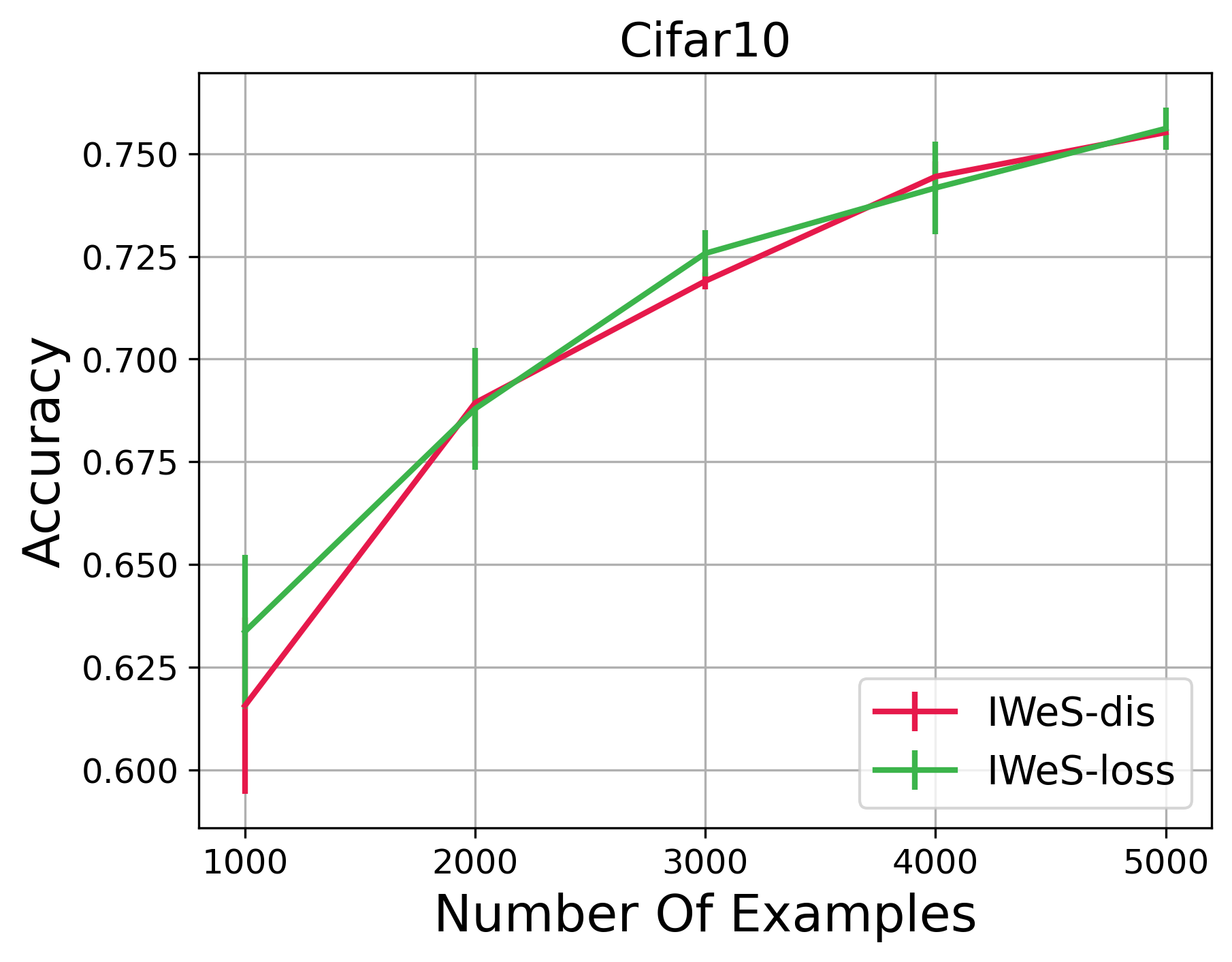}
\end{tabular}
\caption{Accuracy of VGG16 trained on examples selected by \IWEBS-loss and \IWEBS-dis
}
\label{fig:vgg16-iwesloss}
\end{figure}

\section{Proofs of Theoretical Guarantee}
\label{sec:proof}

\subsection{Proof of Proposition \ref{th:bad_genbound}}

\coresetapprox*
\begin{proof}[Proof of Proposition \ref{th:bad_genbound}]
\begingroup
\allowdisplaybreaks
\begin{align*}
L(h')&=\E_{(\x,y)\sim\cD}[\ell(h'(\x),y)]\\
&\leq\frac1T\sum_{i=1}^T\ell(h'(\x_i),y_i)+\sqrt{\frac{\ln(4/\delta)}{2T}}\tag{Hoeffding inequality}\\
&\leq\frac1T\sum_{i=1}^m w_i\ell(h'(\x_i'),y_i')+\epsilon+\sqrt{\frac{\ln(4/\delta)}{2T}} \tag{$0\leq\ell(\cdot)\leq 1$}\\
&\leq \frac1T\sum_{i=1}^m w_i\ell(h^*(\x_i'),y_i)+\epsilon+\sqrt{\frac{\ln(4/\delta)}{2T}} \tag{By the definition of $h'$}\\
&\leq \frac1T\sum_{i=1}^T\ell(h^*(\x_i),y_i)+2\epsilon+\sqrt{\frac{\ln(4/\delta)}{2T}}\\
&\leq \E_{(\x,y)\sim\cD}[\ell(h^*(\x),y)]+2\epsilon+2\sqrt{\frac{\ln(4/\delta)}{2T}} \tag{Hoeffding inequality}\\
&= L(h^*)+2\epsilon+2\sqrt{\frac{\ln(4/\delta)}{2T}}
\end{align*}
\endgroup

\end{proof}

\subsection{Proofs for statements in Section \ref{sec:genbound}}

Algorithm \ref{algo:pre-iwals} contains a detailed pseudocode of the \IWEBSV\ algorithm referred to in the main body of the paper.
\begin{algorithm}[!htbp]
\caption{\IWEBSV}
\centering
\begin{algorithmic}[1]\label{algo:pre-iwals}
\REQUIRE Labeled Pool $\cP$.
\STATE Initialize: $\cS_0=\emptyset, \cH_0=\cH$.
\FOR{$t=1,2,\ldots,T$}
\STATE Sample $(\x_t,y_t)$ uniformly at random from $\cP$.
\STATE Update:
\vspace{-15pt}
\begin{align*}
\cH_t&=\left\{h\in \cH_{t-1}: \frac{1}{t-1} \sum_{i=1}^{t-1} \frac{Q_i}{p_i} \ell(h(\x_i),y_i)\leq \min_{h\in \cH_{t-1}}\frac{1}{t-1}\sum_{i=1}^{t-1}\frac{Q_i}{p_i}\ell(h(\x_i),y_i)+\Delta_{t-1}\right\}
\end{align*}
where $\Delta_{t-1}=\sqrt{\frac{8\log(2T(T+1)|\cH|^2/\delta)}{t-1}}$.
\STATE Set $p_t=\max_{f,g\in \cH_t}\ell(f(\x_t),y_t)-\ell(g(\x_t),y_t)$.
\STATE $Q_t\sim\textrm{Bernoulli}(p_t)$.
\IF{$Q_t=1$}
\STATE Set $\cS_t=\cS_{t-1}\cup\bc{\br{\x_t,y_t,\frac{1}{p_t}}}$
\ELSE
\STATE $\cS_t=\cS_{t-1}$.
\ENDIF
\STATE $h_t=\arg\min_{h\in H}\sum_{(\x,y,w)\in \cS_t}w\cdot \ell(h(\x),y)$.
\ENDFOR
\RETURN $h_T$
\end{algorithmic}

\end{algorithm}

In order to prove Theorem \ref{th:genbound}, we first present the following Lemma.
\begin{lemma}\label{lemma:genbound}
Define the weighted empirical loss as $L_{t-1}(f)=\frac{1}{t-1}\left(\sum_{i=1}^{t-1} \frac{Q_i}{p_i}\ell(f(\x_i),y_i) \right).$ For all $\delta>0$ with probability at least $1-\delta$, for all $t\in\bc{1,\ldots,T}$, and for all $f, g\in \cH_{t-1}$, we have
\begin{equation*}
|L_{t-1}(f) -L_{t-1}(g)-L(f)+L(g)| \leq \Delta_{t-1}.
\end{equation*}
\end{lemma}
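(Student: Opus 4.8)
The plan is to write the target quantity as the time-average of a martingale difference sequence with bounded increments, apply Azuma--Hoeffding, and finish with a union bound over hypothesis pairs and time steps. Fix an ordered pair $f,g\in\cH$ and set $D_i=\ell(f(\x_i),y_i)-\ell(g(\x_i),y_i)$ and
\[
Z_i=\frac{Q_i}{p_i}D_i-\br{L(f)-L(g)},
\]
so that $L_{t-1}(f)-L_{t-1}(g)-L(f)+L(g)=\frac{1}{t-1}\sum_{i=1}^{t-1}Z_i$. First I would verify the martingale property with respect to the filtration $\cF_{i-1}$: both $\cH_i$ and $p_i$ are determined once $(\x_i,y_i)$ is revealed given $\cF_{i-1}$, and since $Q_i\sim\textrm{Bernoulli}(p_i)$ we have $\E\bs{Q_i/p_i\mid\x_i,y_i,\cF_{i-1}}=1$; averaging over $(\x_i,y_i)\sim\cD$ then gives $\E\bs{\frac{Q_i}{p_i}D_i\mid\cF_{i-1}}=L(f)-L(g)$, hence $\E\bs{Z_i\mid\cF_{i-1}}=0$.

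The crux --- and the reason the particular sampling probability is chosen --- is bounding the increments. The version spaces are nested, $\cH_{t-1}\subseteq\cH_{t-2}\subseteq\cdots\subseteq\cH_i$ for every $i\leq t-1$, so any $f,g\in\cH_{t-1}$ also lie in each $\cH_i$. Because $p_i=\max_{f',g'\in\cH_i}\ell(f'(\x_i),y_i)-\ell(g'(\x_i),y_i)$ ranges over all ordered pairs of $\cH_i$, both signs are attained and hence $\abs{D_i}\leq p_i$ whenever $f,g\in\cH_i$. This is exactly what tames the otherwise unbounded factor $1/p_i$: on that event $\abs{\frac{Q_i}{p_i}D_i}\leq \frac{Q_i}{p_i}p_i=Q_i\leq1$ (with $p_i=0$ forcing $Q_i=0$ and $D_i=0$), so together with $\abs{L(f)-L(g)}\leq1$ the increment satisfies $\abs{Z_i}\leq2$. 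The one technical point I would handle carefully is that this bound only holds on the random event $\{f,g\in\cH_i\}$, while the concentration inequality needs increments bounded almost surely. The fix is to replace $Z_i$ by $\tilde Z_i=Z_i\,\1[f,g\in\cH_i]$; since the version spaces are nested and $\cH_i$ is $\cF_{i-1}$-measurable, the indicator is predictable and non-increasing, so $\{\tilde Z_i\}$ is still a martingale difference sequence, now with $\abs{\tilde Z_i}\leq2$ almost surely, and it coincides with $Z_i$ on the event $\{f,g\in\cH_{t-1}\}$ that the lemma actually concerns.

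Finally, applying Azuma--Hoeffding to $\sum_{i=1}^{t-1}\tilde Z_i$ (increments of range at most $4$) yields, for the fixed pair and fixed $t$, $\bP\bs{\abs{\frac{1}{t-1}\sum_{i=1}^{t-1}\tilde Z_i}>\varepsilon}\leq 2\exp\br{-(t-1)\varepsilon^2/8}$. Choosing $\varepsilon=\Delta_{t-1}=\sqrt{8\log(2T(T+1)\abs{\cH}^2/\delta)/(t-1)}$ makes each such tail at most $\delta/\br{T(T+1)\abs{\cH}^2}$, and a union bound over the $\abs{\cH}^2$ ordered pairs and the $T$ time indices keeps the total failure probability below $\delta$; on the complementary event the stated inequality holds for all $f,g\in\cH_{t-1}$ and all $t$ simultaneously. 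The main obstacle throughout is the unboundedness of $1/p_i$: everything hinges on the nestedness of the version spaces together with the max-disagreement definition of $p_i$ collapsing each increment to at most $Q_i\leq1$, and on the predictable-indicator device that converts the ``on-event'' bound into an almost-sure one suitable for the martingale inequality.
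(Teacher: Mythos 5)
Your proof is correct and takes essentially the same route as the paper's: the identical martingale difference sequence $Z_i$, the increment bound $|Z_i|\leq 2$ via $p_i\geq|\ell(f(\x_i),y_i)-\ell(g(\x_i),y_i)|$ for pairs in the version space (using nestedness), Azuma's inequality with the same constants, and a union bound over pairs in $\cH$ and over $t$. Your predictable-indicator device $\tilde Z_i = Z_i\,\1[f,g\in\cH_i]$ tightens a point the paper glosses over---the paper applies Azuma to $Z_i$ for pairs in the random set $\cH_{t-1}$ without addressing that the almost-sure increment bound needed by the inequality only holds on that event---so your write-up is, if anything, the more rigorous of the two.
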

\begin{proof}[Proof of Lemma \ref{lemma:genbound}]
Fix any $t\in\bc{1,\ldots,T}$. For any $f, g \in \cH_{t-1}$, define
$$Z_i= \frac{Q_i}{p_i}\left(\ell(f(\x_i),y_i)-\ell(g(\x_i),y_i)\right)-\left(L(f)-L(g)\right)$$
for $i\in\bc{1,\ldots,t-1}.$
The sequence $Z_i$ is a martingale difference since $$\E\bs{Z_i| Z_1, \ldots Z_{i-1}}=\E\bs{\frac{Q_i}{p_i}(\ell(f(\x_i),y_i)-\ell(g(\x_i),y_i)-(L(f)-L(g))| Z_1, \ldots Z_{i-1}}=0.$$
Due to the fact that $$|Z_i| \leq \frac{1}{p_i}|\ell(f(\x_i),y_i)- \ell(g(\x_i),y_i)| + |L(f) -L(g)|\leq 2$$
as $p_i \geq | \ell(f(\x_i),y_i) - \ell(g(\x_i),y_i) |$ for $f, g\in \cH_{t-1}$.
Thus, we can apply Azuma's inequality,
\begin{align*}
&\Pr[|L_{t-1}(f) - L_{t-1}(g)-L(f)+L(g)|\geq \Delta_{t-1}] \\
&  \leq 2 \exp(-(t-1) \Delta_{t-1}^2 / 8)= \frac{\delta}{T(T+1)|H|^2},
\end{align*}
where we used the fact that $(\x_t,y_t)$ is i.i.d.
Since $\cH_{t-1}$ is a random subset of $\cH$, we take the union bound over $f,g\in \cH$ and $t-1$. Then we take another union bound over $T$ to finish the proof.
\end{proof}

Next we provide the proof of Theorem \ref{th:genbound}.
\iwebsgenbound*
\begin{proof}[Proof of Theorem \ref{th:genbound}]
We show that $h^*\in \cH_t$ by induction. Base case holds as $h^*\in \cH_1=\cH$. Now assuming that $h^*\in \cH_{t-1}$ holds, we show that $h^*\in \cH_t$. Let $h'=\argmin_{f\in \cH_{t-1}} L_{t-1}(f)  $.
By Lemma~\ref{lemma:genbound}, $$L_{t-1}(h^*)-L_{t-1}(h^{'})\leq L(h^*) - L(h{'}) + \Delta_{t-1}\leq \Delta_{t-1}$$ since $L(h^*) - L(h')\leq 0$ by definition of $h^*$. Thus, $L_{t-1}(h^*) \leq L_{t-1}(h^{'} )+ \Delta_{t-1}$ which means that $h^*\in \cH_t$ by definition of $\cH_t$.

Since $\cH_t\subseteq H_{t-1}$, Lemma~\ref{lemma:genbound} implies that for any $f,g\in \cH_t$,
\begin{align*}
& L(f)-L(g) \leq L_{t-1}(f)- L_{t-1}(g) +\Delta_{t-1} \\
&\leq L_{t-1}(h')+ \Delta_{t-1}-L_{t-1}(h')+ \Delta_{t-1} \leq 2 \Delta_{t-1}.
\end{align*}
Noting that $h^*,h_t\in \cH_t$ completes the proof.
\end{proof}

\subsection{Proofs for statements in Section \ref{sec:samplerate}}

\coresetiwalsone*
\begin{proof}[Proof of Theorem \ref{th:coreset_iwals1}]
By Theorem~\ref{th:genbound}, $\cH_t \subset \{h\in \cH: L(h) \leq L(h^*) + 2\Delta_{t-1} \}$. Using this fact and that $h^*$ is the best in class, it holds that
\begin{align*}
\rho_\IWALSone(h,h^*) = \E_{(\x,y)\in \cD} | \ell(h(\x),y) - \ell(h^*(\x),y) | \leq L(h) + L(h^*) \leq 2L(h^*) + 2\Delta_{t-1} .
\end{align*}
Thus, letting $r= 2L(h^*) + 2\Delta_{t-1}$, it holds that $\cH_t\subset \cB_\IWALSone(h^*,r)$. Then,
\begin{align*}
\E_{(\x_t,y_t)\in\cD}[p_t|\cF_{t-1}] &=\E_{(\x_t,y_t)\in\cD}[\sup_{f,g\in \cH_{t}} |\ell(f(\x_t),y_t)-\ell(g(\x_t),y_t)||\cF_{t-1}]\\
& \leq 2\E_{(\x_t,y_t)\in\cD}[\sup_{h\in \cH_{t}} |\ell(h(\x_t),y_t)-\ell(h^*(\x_t),y_t)||\cF_{t-1}]  \\
& \leq 2\E_{(\x_t,y_t)\in\cD}[\sup_{h\in \cB_\IWALSone(h^*,r)} |\ell(h(\x_t),y_t)-\ell(h^*(\x_t),y_t)|]  \\
& \leq 2\theta_\IWALSone r= 4 \theta_\IWALSone (L(h^*) + \Delta_{t-1})
\end{align*}

By summing the above over $t\in[T]$, the sample complexity bound of \IWEBSV\ is then given by $\cO\br{\theta_\IWALSone L(h^*) T + \theta_\IWALSone \sqrt{T}}$, which completes the proof.
\end{proof}

\coresetiwalsonecomp*
\begin{proof}[Proof of Theorem \ref{th:coresetiwals1_comp}]
We first prove
\begin{align}\label{eq:ball_iwals_al}
\cB_\IWAL(h^*,r)\subseteq \cB_\IWALSone(h^*,r)\subseteq \cB_\IWAL(h^*,K_\ell r).
\end{align}
The left hand side follows directly from the definition. For the right hand side, assume $h\in\cB_\IWALSone(h^*,r)$, then by the definition of $K_\ell$, we have
\begin{align*}
r&\geq\rho_\IWALSone(h,h^*)=\E_{(\x,y)\in\cD}\bs{|\ell(h(\x),y)-\ell(h^*(\x),y)|}
\geq \frac{1}{K_{\ell}}\E_{\x\in\cX}\sup_{y\in\cY}|\ell(h(\x),y)-\ell(h^*(\x),y)|=\frac{1}{K_{\ell}}\rho_\IWAL(h,h^*)
\end{align*}
where the second inequality comes from
\begin{align*}
K_\ell&=\sup_{z,z'\in\cZ}\frac{\max_{y\in\cY}|\ell(z,y)-\ell(z',y)|}{\min_{y\in\cY}|\ell(z,y)-\ell(z',y)|} \\ & \geq \sup_{\x\sim\cX}\frac{\max_{y\in\cY}|\ell(h(\x),y)-\ell(h^*(\x),y)|}{\min_{y\in\cY}|\ell(h(\x),y)-\ell(h^*(\x),y)|}\\
&\geq\E_{\x\sim\cX}\frac{\max_{y\in\cY}|\ell(h(\x),y)-\ell(h^*(\x),y)|}{\min_{y\in\cY}|\ell(h(\x),y)-\ell(h^*(\x),y)|}\\
&\geq \frac{\E_{\x\sim\cX}\max_{y\in\cY}|\ell(h(\x),y)-\ell(h^*(\x),y)|}{\E_{\x\sim\cX}\min_{y\in\cY}|\ell(h(\x),y)-\ell(h^*(\x),y)|}\\
&\geq \frac{\E_{\x\sim\cX}\max_{y\in\cY}|\ell(h(\x),y)-\ell(h^*(\x),y)|}{\E_{(\x,y)\sim\cD}|\ell(h(\x),y)-\ell(h^*(\x),y)|}
\end{align*}
which follows by basic properties of expectations.
As a result, we have
\begin{align*}
\theta_\IWALSone &= \sup_{r\geq 0}\frac{\E_{(\x,y)\sim\cD}\left[\max_{h\in \cB_\IWALSone(h^*,r)}| \ell(h(x),y) - \ell(h^*(x),y)|\right]}{r} \tag{By definition of $\theta_\IWALSone$}\\
&\leq \sup_{r\geq 0}\frac{\E_{(\x,y)\sim\cD}\left[\max_{h\in \cB_\IWAL(h^*,K_{\ell} r)}| \ell(h(x),y) - \ell(h^*(x),y)|\right]}{r} \tag{Use equation~\eqref{eq:ball_iwals_al}}\\
&= K_{\ell} \sup_{r\geq 0}\frac{\E_{(\x,y)\sim\cD}\left[\max_{h\in \cB_\IWAL(h^*,r)}| \ell(h(x),y) - \ell(h^*(x),y)|\right]}{r}\tag{Redefine $K_\ell r$ as $r$}\\
&\leq K_{\ell}\sup_{r\geq 0}\frac{\E_{\x\in\cX}\left[\max_{h\in \cB_\IWAL(h^*,r)}\sup_{y\in\cY}| \ell(h(x),y) - \ell(h^*(x),y)|\right]}{r}\\
&=K_{\ell} \theta_\IWAL.
\end{align*}
which concludes the proof.
\end{proof}

\subsection{The Enhanced-\IWEBSV\ Algorithm}\label{sec:enhanced}

Here we analyze a modified algorithm, which we called the Enhanced-\IWEBSV\ ,  that uses new slack term defined as
\begin{align*}
\Delta^{\textrm{EIWeS}}_t =
\frac{2}{t} \Big(\sqrt{\sum_{t=1}^T p_t }+ 6\sqrt{\log\br{(3+t)t^2/\delta}}\Big) \cdot \sqrt{\log\br{8T^2|\cH|^2\log (T)/\delta}}
\end{align*}
to define the version space in the \IWEBSV\ algorithm. The theorem below proves that the Enhanced-\IWEBSV\ admits improved sampling rate bound that is smaller than the bound in Theorem~\ref{th:coreset_iwals1}  since the square root term also scales with $L(h^*)$. In particular, under realizable setting where $L(h^*)=0$, the sampling rate bound is $\cO\br{\log^3\br{T}}$, which depends poly-logarithmic in $T$ and is smaller than the $\cO\br{\sqrt{T\log\br{T}}}$ bound from Theorem~\ref{th:coreset_iwals1}.
\begin{restatable}{theorem}{eiwebsone}\label{th:eiwebs1}
For all $\delta>0$, for all $T\geq 3$, with probably at least $1-\delta$,   the expected sampling rate of the Enhanced-\IWEBSV\ algorithm is:
\begin{align*}
\sum_{t=1}^T\E_{(\x_t,y_t)\sim\cD} [p_t |\cF_{t-1}] \leq \theta_\IWALSone\br{L(h^*)T+\cO\br{\sqrt{L(h^*)T\log(T/\delta)}}}+\cO\br{\log^3(T/\delta)}.
\end{align*}
\end{restatable}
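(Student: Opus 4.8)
The plan is to mirror the proof of Theorem~\ref{th:coreset_iwals1}, replacing the Azuma-based deviation bound of Lemma~\ref{lemma:genbound} with a variance-sensitive (Freedman / empirical-Bernstein type) martingale concentration inequality, following the enhancement of \citet{cortesdesalvogentilmohrizhang2019}. The decisive structural observation is that the conditional variance of the importance-weighted loss differences is itself controlled by the sampling probabilities, which sets up a self-bounding recursion for the cumulative sampling rate.

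Concretely, I would first establish an enhanced analog of Lemma~\ref{lemma:genbound}. For $f,g\in\cH_{t-1}$ consider the martingale differences $Z_i=\frac{Q_i}{p_i}\br{\ell(f(\x_i),y_i)-\ell(g(\x_i),y_i)}-\br{L(f)-L(g)}$, which satisfy $|Z_i|\leq 2$ exactly as in the original proof. The new ingredient is the conditional second moment: since $\E[Q_i^2\mid \x_i,y_i]=p_i$ and $|\ell(f(\x_i),y_i)-\ell(g(\x_i),y_i)|\leq p_i$ for hypotheses in the version space, one obtains $\E[Z_i^2\mid\cF_{i-1}]\leq \E_{(\x_i,y_i)}\bs{\tfrac{1}{p_i}\br{\ell(f(\x_i),y_i)-\ell(g(\x_i),y_i)}^2}\leq \E[p_i\mid\cF_{i-1}]$. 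Hence the predictable variance sum is controlled by the expected cumulative sampling $\sum_s\E[p_s\mid\cF_{s-1}]$, and after a further concentration step converting this to the empirical $\sum_s p_s$ (this is the role of the $6\sqrt{\log((3+t)t^2/\delta)}$ term), a Freedman bound together with a union bound over $\cH\times\cH$ and over $t$ yields, with probability $1-\delta$, $|L_{t-1}(f)-L_{t-1}(g)-L(f)+L(g)|\leq \Delta^{\textrm{EIWeS}}_{t-1}$ for all $t$ and all $f,g\in\cH_{t-1}$. The $\sqrt{\sum_s p_s}$ factor in $\Delta^{\textrm{EIWeS}}_t$ is precisely the variance term and the remaining $\sqrt{\log(\cdots)}$ factors supply the union-bound width.

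Given this lemma, I would repeat the arguments of Theorems~\ref{th:genbound} and~\ref{th:coreset_iwals1} nearly verbatim with $\Delta_{t-1}$ replaced by $\Delta^{\textrm{EIWeS}}_{t-1}$: an induction shows $h^*\in\cH_t$ and $\cH_t\subseteq\bc{h:L(h)\leq L(h^*)+2\Delta^{\textrm{EIWeS}}_{t-1}}\subseteq\cB_\IWALSone(h^*,r_t)$ with $r_t=2L(h^*)+2\Delta^{\textrm{EIWeS}}_{t-1}$, whence $\E[p_t\mid\cF_{t-1}]\leq 2\theta_\IWALSone r_t=4\theta_\IWALSone\br{L(h^*)+\Delta^{\textrm{EIWeS}}_{t-1}}$.

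The main obstacle --- and the step where the improvement is actually extracted --- is the self-bounding argument in the final summation. Writing $V=\sum_t p_t$ and summing the per-step bound, the factor $\tfrac{1}{t}\sqrt{\sum_{s\leq t}p_s}\leq \tfrac{1}{t}\sqrt{V}$ sums via $\sum_t \tfrac1t=\cO(\log T)$ to give an inequality in which $V$ reappears on the right inside a square root. Using the mutual concentration of $V$ and $\sum_t\E[p_t\mid\cF_{t-1}]$, this reduces to a single self-referential inequality $S\leq A+B\sqrt{S}+D$ with $A=\cO(\theta_\IWALSone L(h^*)T)$, $B=\cO\br{\theta_\IWALSone\sqrt{\log(\cdots)}\,\log T}$, and $D$ lower order. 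Solving it as a quadratic in $\sqrt S$ gives $\sqrt S\leq B+\sqrt{A+D}$, and expanding $S\leq(B+\sqrt{A+D})^2=B^2+2B\sqrt{A+D}+(A+D)$ produces exactly the three advertised terms: the leading $\theta_\IWALSone L(h^*)T$ from $A$, the cross term $\theta_\IWALSone\,\cO\br{\sqrt{L(h^*)T\log(T/\delta)}}$ from $2B\sqrt{A}$, and the residual $\cO(\log^3(T/\delta))$ from $B^2$ (which carries one $\log T$ from $\sum 1/t$ and the $\log(\cdots)$ from the concentration width, yielding the cube); in the realizable case $L(h^*)=0$ only this $\log^3$ term survives. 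The delicate points I expect to fight with are matching the exact form of the Freedman/Bernstein width to $\Delta^{\textrm{EIWeS}}$, correctly handling the empirical-versus-expected sampling conversion (including the adaptive dependence of $\cH_i$ and $p_i$ on the fresh sample), and carefully tracking logarithmic and $\theta_\IWALSone$ factors so the bound collapses as claimed.
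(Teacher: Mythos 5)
Your overall skeleton matches the paper's proof, which itself defers most details to Lemma 6 of \citet{cortesdesalvogentilmohrizhang2019}: a variance-sensitive (Freedman-type) martingale bound justifying the $\sqrt{\sum_t p_t}$ term in $\Delta^{\textrm{EIWeS}}$, the per-step bound $\E[p_t\mid\cF_{t-1}]\leq 4\theta_\IWALSone\br{L(h^*)+\Delta^{\textrm{EIWeS}}_{t-1}}$ inherited from the argument of Theorem~\ref{th:coreset_iwals1}, a concentration step relating $\sum_s p_s$ to $\sum_s\E[p_s\mid\cF_{s-1}]$, and a resulting self-referential inequality. Your variance computation $\E[Z_i^2\mid\cF_{i-1}]\leq\E\bs{\br{\ell(f(\x_i),y_i)-\ell(g(\x_i),y_i)}^2/p_i\mid\cF_{i-1}}\leq\E[p_i\mid\cF_{i-1}]$, using $|\ell(f(\x_i),y_i)-\ell(g(\x_i),y_i)|\leq p_i$ for $f,g$ in the nested version spaces, is correct and is in fact more explicit than anything the paper writes down.

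The divergence --- and the genuine gap --- is in how the self-reference is resolved. The paper resolves the recursion \emph{per step} by induction on $t$, establishing $\E[p_t\mid\cF_{t-1}]\leq 4\theta_\IWALSone L(h^*)+c_3\sqrt{L(h^*)/(t-1)}+c_4/(t-1)$, and only then sums: $\sum_t\sqrt{L(h^*)/t}=\cO\br{\sqrt{L(h^*)T}}$ and $\sum_t 1/t=\cO(\log T)$, which is what produces the claimed cross term and $\log^3$ term. You instead sum first, bound every partial sum $\sqrt{\sum_{s<t}p_s}$ by the global $\sqrt{V}$, and solve the quadratic $S\leq A+B\sqrt{S}+D$. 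This is exactly where the claimed bound is lost: since $\sum_t\tfrac1t\sqrt{V}=\sqrt{V}\cdot\Theta(\log T)$, your own $B$ is $\Theta\br{\theta_\IWALSone\log T\sqrt{\log(T/\delta)}}$, so $2B\sqrt{A}=\cO\br{\theta_\IWALSone^{3/2}\log T\sqrt{L(h^*)T\log(T/\delta)}}$, which exceeds the advertised $\theta_\IWALSone\cdot\cO\br{\sqrt{L(h^*)T\log(T/\delta)}}$ by a factor of $\sqrt{\theta_\IWALSone}\,\log T$; your final sentence asserting that $2B\sqrt{A}$ ``produces exactly'' the advertised cross term contradicts the $\log T$ factor you yourself put into $B$. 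One can absorb the cross term by AM-GM into $\cO(\theta_\IWALSone L(h^*)T)+\cO(\theta_\IWALSone^2\log^3(T/\delta))$, but that proves the theorem only up to inflated constants and $\theta$-factors, not the stated middle term. The reason the per-step induction does better is that it exploits the roughly linear growth of the partial sums: when $\sum_{s<t}\E[p_s\mid\cF_{s-1}]\approx L(h^*)\,t$, the $1/(t-1)$ prefactor converts $\sqrt{\sum_{s<t}}$ into $\sqrt{L(h^*)/(t-1)}$, whose sum is $\sqrt{L(h^*)T}$ with no extra logarithm; replacing each partial sum by the grand total forfeits precisely this cancellation. So to recover the stated bound you must switch your final step to the per-step induction of \citet{cortesdesalvogentilmohrizhang2019}, as the paper does.
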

\begin{proof}[Proof of Theorem~\ref{th:eiwebs1}]
The proof follows from that of Lemma 6 in \cite{cortesdesalvogentilmohrizhang2019}. From Theorem~\ref{th:coreset_iwals1}, for $t\geq 3$,
\begin{align*}
\sum_{t=1}^T\E_{(\x_t,y_t)\sim\cD} \bs{p_t\big |\cF_{t-1}}
=4\theta_\IWALSone\br{L(h^*)+\Delta_{t-1}}.
\end{align*}
Plugging in the expression of $\Delta^{\textrm{EIWeS}}_{t-1}$ into $\Delta_{t-1}$, and applying a concentration inequality to relate
$\sum_{t=1}^T p_t$ to $\sum_{t=1}^T  \E_{(\x_t,y_t)\sim\cD}  \bs{p_t\big |\cF_{t-1}}$,
we end up with a recursion on $\E_{(\x_t,y_t)\sim\cD} \bs{ p_t | \cF_{t-1}}$:
\begin{align*}
\E_{(\x_t,y_t)\sim\cD}\bs{p_t\big |\cF_{t-1}}
\leq4\theta_\IWALSone L(h^*)+\frac{4\theta_\IWALSone c_1}{t-1}\sqrt{\sum_{s=1}^{t-1}\E_{(\x_t,y_t)\sim\cD}\bs{p_s|\cF_{s-1}}}+c_2\br{\frac{\log\br{(t-1)|\cH|/\delta}}{t-1}}.
\end{align*}
where $c_1=2\sqrt{\log\br{\frac{8T^2|\cH|^2\log\br{T}}{\delta}}}$, and $c_2$ is a constant. Then we show for all $t\geq 3$, for constant $c_3=\cO\br{\sqrt{\log\br{T|\cH|/\delta}}}$ and $c_4=\cO\br{\log^2\br{T|\cH|/\delta}}$, we have
\begin{align*}
\E_{(\x_t,y_t)\sim\cD}\bs{p_t\big |\cF_{t-1}}
\leq 4\theta_\IWALSone L(h^*)+c_3\sqrt{\frac{L(h^*)}{t-1}}+\frac{c_4}{t-1}
\end{align*}
The above can be proved by induction as in \cite{cortesdesalvogentilmohrizhang2019}. By summing the above over $t\in[T]$ gives us the result. We absorb the dependency on $|\cH|$ inside the big-O notation.

\end{proof}

\end{document}